\definecolor{newcolor}{rgb}{.8,.349,.1}
\newtheorem{theorem}{Theorem}
\newtheorem{lemma}{Lemma}
\newtheorem{definition}{Definition}%
\DeclareRobustCommand\onedot{\futurelet\@let@token\@onedot}
\def\@onedot{\ifx\@let@token.\else.\null\fi\xspace}
\def\eg{\emph{e.g}\onedot} 
\def\ie{\emph{i.e}\onedot} 
\def\etc{\emph{etc}\onedot}
\journal{Arxiv}
\begin{document}

\begin{frontmatter}



\title{Improving Model Generalization by On-manifold Adversarial Augmentation \\ in the Frequency Domain}


\author[1,*]{Chang~Liu} 
\author[2,5,*]{Wenzhao Xiang}
\author[4]{Yuan He}
\author[4]{Hui Xue}
\author[1]{Shibao Zheng}
\author[3,5]{Hang Su\corref{cor1}}

\cortext[cor1]{Corresponding author: 
  Tel.: +0-000-000-0000;}
\ead{suhangss@mail.tsinghua.edu.cn}

\address[1]{Institute of Image Communication and Networks Engineering in the Department of Electronic Engineering~(EE), Shanghai Jiao Tong University, Shanghai 200240, China.}
\address[3]{Department of Computer Science and Technology, Institute for AI, THBI Lab, Tsinghua University, Beijing 100084, China.}
\address[2]{Key Lab of Intelligent Information Processing of Chinese Academy of Sciences (CAS), Institute of Computing Technology, CAS, Beijing, 100190, China}
\address[5]{Pengcheng Laboratory, China}
\address[4]{Security Department, Alibaba Group, Beijing 100084, China.}

\begin{abstract}
Deep Neural Networks (DNNs) often experience substantial performance deterioration when the training and test data are drawn from disparate distributions. Guaranteeing model generalization for Out-Of-Distribution (OOD) data remains critical; however, current state-of-the-art (SOTA) models consistently show compromised accuracy with such data. Recent investigations have illustrated the benefits of using regular or off-manifold adversarial examples as data augmentation for improving OOD generalization. Building on these insights, we provide a theoretical validation indicating that on-manifold adversarial examples can yield superior outcomes for OOD generalization. However, generating on-manifold adversarial examples is not straightforward due to the intricate nature of real manifolds.

To overcome this challenge, we propose a unique method, Augmenting data with Adversarial examples via a Wavelet module (AdvWavAug), which is an on-manifold adversarial data augmentation strategy that is exceptionally easy to implement. In particular, we use wavelet transformation to project a clean image into the wavelet domain, leveraging the sparsity characteristic of the wavelet transformation to modify an image within the estimated data manifold. Adversarial augmentation is conducted based on the AdvProp training framework. Comprehensive experiments on diverse models and datasets, including ImageNet and its distorted versions, reveal that our method markedly bolsters model generalization, especially for OOD data. By incorporating AdvWavAug into the training process, we have attained SOTA results on recent transformer-based models.
\end{abstract}



\begin{keyword}
OOD generalization\sep adversarial augmentation\sep wavelet transformation



\end{keyword}

\end{frontmatter}



\def\thefootnote{1}\footnotetext{These authors contributed equally to this work}\def\thefootnote{\arabic{footnote}}

\section{Introduction}
Although deep neural networks (DNNs) have achieved remarkable performance across various tasks, their reliance on the strong assumption of independent and identically distributed (i.i.d.) training and test samples often falls short in real-world applications. Due to biased processes of collecting training data and confounding factors~\citep{hendrycks2019benchmarking, recht2019imagenet, corbett2018measure, fonseca2022similarity}, the occurrence of out-of-distribution (OOD) samples is commonplace. From a statistical standpoint, the distribution of OOD samples may deviate from the training data due to unknown shifts, ultimately resulting in catastrophic failure of DNNs' generalization to OOD samples, as evidenced in previous work~\citep{calian2021defending}. Even minor corruptions such as blurring or noise can severely deteriorate the performance of existing classifiers~\citep{vasiljevic2016examining, geirhos2018generalisation}. Consequently, the limited generalization to OOD data has hindered the practical deployment of DNNs.

\begin{figure*}[t]
  \centering
  \includegraphics[width=0.9\linewidth]{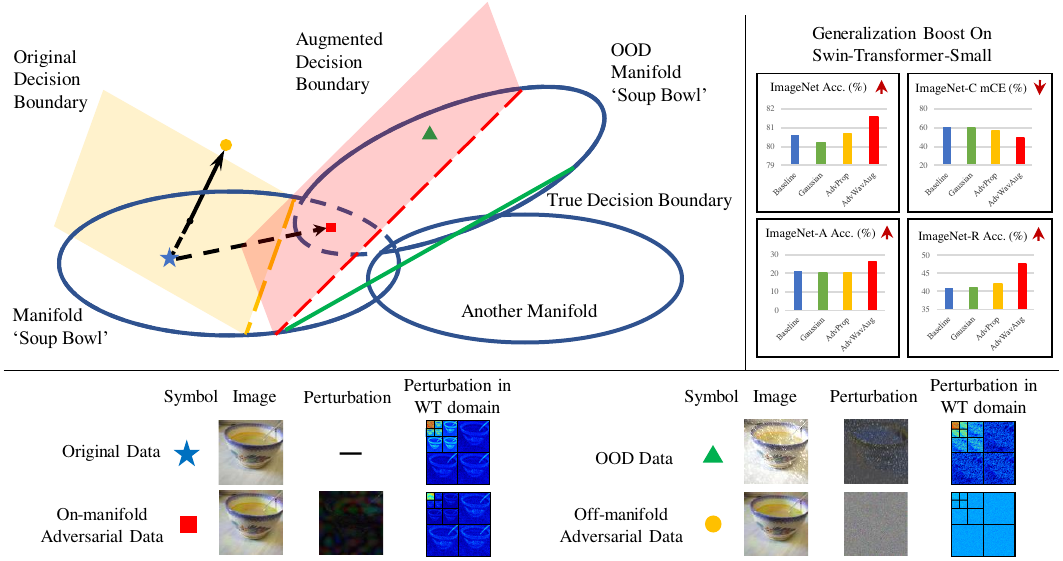}
  \caption{Boosting model generalization with on-manifold adversarial examples in the frequency domain. The OOD data (denoted as a green triangle) resides on a manifold, which is connected but distribution shifted from the original data manifold. An off-manifold adversarial example (denoted as a red point) moves outside the manifold, while an on-manifold one (denoted as a blue box) remains on the manifold. The on-manifold perturbations are more closely aligned with the semantic meaning in the wavelet domain. Extensive results, such as Swin Transformer Small~\citep{liu2021swin}, demonstrate that our AdvWavAug significantly improves model generalization on ImageNet~\citep{russakovsky2015imagenet} and its distorted versions across various backbone models.
  }
  \label{fig:1}
\end{figure*}

Data augmentations have been proven to be effective in improving model generalization to OOD data. Previous work~\citep{devries2017improved,yun2019cutmix,zhang2017mixup} has reduced generalization errors by randomly selecting transform-based augmentations. Some of these methods~\citep{cubuk2019autoaugment,cubuk2020randaugment} have even learned adaptive augmentation policies. However, these approaches heavily rely on the diversity of the selected transforms, leaving uncertainty regarding the extent to which a particular transform can truly benefit OOD generalization. A more active data augmentation is adversarial training in~\citep{tramer2017ensemble}, which augments input data with adversarial examples. Most of previous work~\citep{das2018shield,akhtar2018defense,papernot2016distillation,xie2017mitigating} has primarily focused on regular adversarial examples, such as those generated by the Projected Gradient Descent (PGD) method~\citep{madry2017towards}. Regular adversarial examples establish an upper bound on corrupted risks, as demonstrated in~\citep{yi2021improved}, indicating their potential to improve OOD generalization. However, regular adversarial examples often exhibit noise-like patterns in the background, which are unrelated to the objects present in the clean images. Consequently, augmentations with these off-manifold adversarial examples may encounter internal friction, wherein the noises that DNNs can defend against are infrequent in real-world applications. This limitation could potentially restrict their effectiveness in enhancing OOD generalization.

To leverage adversarial augmentations effectively, explore the generalization boundary, and improve OOD generalization with more certainty, we employ on-manifold adversarial examples for data augmentation. In this context, the data manifold is defined as a lower-dimensional latent space within the high-dimensional data space observed in the real world. This concept, introduced in~\citep{stutz2019disentangling}, helps separate common data from regular adversarial examples. On-manifold adversarial examples are more closely related to semantic meanings compared to off-manifold adversarial examples. Consequently, data augmentation using these on-manifold examples directs the classifier's attention toward semantic changes, which are more advantageous for OOD generalization. We provide a theoretical analysis demonstrating that models resilient to on-manifold adversarial examples have an upper bound of corrupted risks, which is commonly smaller than models resilient to off-manifold adversarial examples. It is noted that we specifically focus on the OOD data within the context of distributional robustness, rather than considering OOD data with correlation shift or diversity shift described in~\citep{ye2022ood}. However, generating on-manifold adversarial examples remains a challenging task due to the general lack of knowledge about the real manifold. Some approaches~\citep{zhou2020manifold, stutz2019disentangling} approximate the lower-dimensional data manifold by training a variational autoencoder (VAE)~\citep{kingma2013auto}.
However, representing the manifold with a VAE in large-scale datasets is inefficient, as it incurs high computational costs for training and generating additional samples. Inspired by compression algorithms (\eg,~\citep{villasenor1995wavelet, rabbani2002jpeg2000}), which indicate the high sparsity of images in the frequency domain, we aim to approximate the manifold in the frequency domain. We find that perturbing the non-sparse coefficients in the frequency domain is a similar but more efficient process, compared with the VAE implementation described in~\citep{stutz2019disentangling}.

In this paper, we propose a novel framework to enhance the generalization of DNN models by Augmenting data with Adversarial examples via a Wavelet module (AdvWavAug). Our framework incorporates a wavelet projection module, which consists of a discrete wavelet transform and its inverse, to obtain the wavelet coefficients of the input data. These coefficients are then modified based on the gradients back-propagated from the loss function. Unlike regular adversarial attacks, we introduce a multiplicative attention map for the wavelet coefficients, allowing any perturbation to be added to the attention map. The advantage of using a multiplicative attention map is that it preserves the sparsity of the wavelet coefficients, as coefficients at sparse positions will always be zeros. Finally, the modified adversarial examples are generated by applying the inverse wavelet transform to the product of the wavelet coefficients and the modified attention map. This entire process can be seen as projecting the back-propagated gradient into the wavelet domain while preserving the adversarial perturbations present at the non-sparse coefficients.

We also provide a detailed analysis that demonstrates how augmentations with on-manifold adversarial examples lead to an upper bound of corrupted risks. To verify the effectiveness of our method, we conduct model training with these on-manifold adversarial examples. Note that AdvProp~\citep{xie2020adversarial} has more advantages in improving generalization to OOD data. Therefore, we adopt it as the underlying training framework for our AdvWavAug.

Experiments on the ImageNet dataset~\citep{russakovsky2015imagenet} and its distorted versions (ImageNet-A~\citep{hendrycks2021natural}, ImageNet-R~\citep{hendrycks2021many} and ImageNet-C~\citep{hendrycks2019benchmarking}) validate the effectiveness of our proposed method. Using Swin Transformer Small~\citep{liu2021swin} as an example, compared with vanilla AdvProp, our method significantly improves generalization on ImageNet by $0.9\%$, ImageNet-A by $5.5\%$, ImageNet-R by $5.6\%$ and ImageNet-C by $8.0\%$. We have also implemented adversarial augmentation using a pre-trained VQ-VAE~\citep{razavi2019generating} network and compared its effectiveness with our method. While achieving similar performance on OOD data, our method significantly reduces the total training time by $59.4\%$ compared to the VQ-VAE implementation. Due to the assistance of on-manifold adversarial examples in understanding the semantic meaning of an image, our method further enhances the generalization of models pre-trained using self-supervised learning. For example, we obtain better results by integrating our method into the fine-tuning process of the Masked AutoEncoder (MAE) in~\citep{he2021masked}, achieving improvements on ImageNet by $0.3\%$, ImageNet-A by $1.8\%$, ImageNet-R by $1.4\%$ and ImageNet-C by $3.7\%$ on Vision Transformer Large~\citep{dosovitskiy2020image}.

In summary, we make the following technical contributions:
\begin{itemize}
    \item To solve the OOD generalization problem, we establish connections between on-manifold adversarial examples and OOD samples, and transfer the original problem into improving robustness to on-manifold adversarial examples. Detailed proof has been provided to verify that models robust to on-manifold adversarial examples have an upper-bounded generalization error to OOD samples.
    \item We propose AdvWavAug, an adversarial augmentation module, to approximate the data manifold in the frequency domain. This module projects noise-like perturbations into the frequency domain and modifies the components located on the non-sparse coefficients. We integrate AdvWavAug into the AdvProp framework to evaluate the effectiveness of our method in terms of model generalization.
    \item We conduct comprehensive experiments on ImageNet and its wide range of variants on OOD data. The results demonstrate that our algorithm significantly enhances the generalization of modern DNN architectures. We have achieved new SOTA results on two transformer-based architectures, including Vision Transformers and Swin Transformers.
\end{itemize}

\section{Background}
In this section, we will provide a brief introduction of augmentation techniques, encompassing both normal transform-based augmentations and adversary-based approaches.

\subsection {Transform-based Augmentations}
Data augmentation has been shown to be effective in improving model generalization. Traditional approaches often involve randomly selecting basic transforms. In image classification tasks, common transformations include random rotations, crops, and flips as described in~\citep{he2016deep}. More complex operations have been used to improve accuracy on clean data, such as Cutout~\citep{devries2017improved}, which randomly occludes parts of an input image; CutMix~\citep{yun2019cutmix}, which replaces a part of the target image with a different image; Mixup~\citep{zhang2017mixup}, which produces a convex combination of two images; DeepAugment~\citep{hendrycks2021many}, which passes a clean image through an image-to-image network and introduces several perturbations during the forward pass. To automate the combination of simple augmentation techniques, an enhanced mixup method~\citep{guo2019mixup} is proposed by adaptively adjusting the mixing policy. Other methods, like AutoAugment~\citep{cubuk2019autoaugment} and RandAugment~\citep{cubuk2020randaugment}, aim to learn augmentation policies automatically. Although some researches have focused on generalization to OOD data, these methods often struggle with unseen categories of corruptions~\citep{hendrycks2019benchmarking}. 

Besides, some methods have explored the robustness to various corruptions in the frequency domain. In~\citep{yin2019fourier}, the authors explore model robustness and augmentation effects from a Fourier perspective. Other methods have also investigated data augmentation in the frequency domain, such as RoHL~\citep{saikia2021improving}, which introduces a mixture of two expert models that specialize in high and low-frequency robustness; APR~\citep{chen2021amplitude} re-combines the phase spectrum of the current image with the amplitude spectrum of a distractor image to generate a new training sample, aiming to encourage the model to capture more structured information.

\subsection{Adversary-based Augmentations}
Adversarial training is another augmentation policy, which takes adversarial examples as the augmented data. 

\noindent \textbf{Adversarial Attack.}
Let $\bm{x}$ represent a clean image and $y$ denote the ground-truth label. A classifier can be denoted as $c(\bm{x}): \mathcal{X} \rightarrow \mathcal{Y}$, where $\bm{x} \in \mathcal{X} \subseteq \mathbb{R}^{d_0}$ is the input image and $\mathcal{Y} = \{1, 2, \cdots, N\}$ is the set of class labels. Adversaries aim to find adversarial examples $\bm{x}^{adv}$ to deceive the classifier. Formally, $\bm{x}^{adv}$ is defined in an $L_p$-norm ball centered around $\bm{x}$, which is expressed as $\|\bm{x}^{adv}-\bm{x} \|_p \leq \epsilon$ with $\epsilon$ being the maximum perturbation constraint for regular adversarial examples. Let $\mathcal{L}$ be the loss function and $\bm{\delta}$ denote the perturbation. For untargeted attacks, the objective is to maximize $\mathcal{L}(\bm{\theta},\bm{x}+\bm{\delta}, y)$ as
\begin{equation}
\label{eq:1}
\mathop{\arg \max} \limits_{\|\bm{\delta}\|_p \leq \epsilon} \mathcal{L}(\bm{\theta},\bm{x}+\bm{\delta}, y).
\end{equation}
In the white-box scenario, where the network structure, parameters, and gradients are accessible, adversarial examples can be generated by adding perturbations to clean images based on the gradient. FGSM~\citep{goodfellow2014explaining} is a common one-step gradient-based white-box attack algorithm. PGD~\citep{madry2017towards} improves FGSM by iterative optimization. Deepfool~\citep{moosavi2016deepfool} iteratively attacks input images based on the decision hyper-plane. Carlini \& Wagner's method (C\&W)~\citep{carlini2017towards} optimizes the attack problem in Lagrangian form and adopts Adam~\citep{kingma2014adam} as the optimizer. Due to the great threat brought by adversarial examples, adversarial training is proposed to defend against these malicious modifications.

\noindent \textbf{Adversarial Training.}
In the vanilla training setting for a DNN model, the objective function is defined as 
\begin{equation}
\label{eq:2}
    \mathop{\arg \min} \limits_{\bm{\theta}}\mathbb{E}_{(\bm{x},y) \sim \mathbb{D}}\mathcal{L}(\bm{\theta},\bm{x},y),
\end{equation}
where $\mathbb{D}$ represents the real data distribution, $\mathcal{L}(\cdot,\cdot,\cdot)$ is the loss function, $\bm{\theta}$ denotes the network parameters, $\bm{x}$ is the input image, and $y$ is the ground-truth label. Adversarial training enhances vanilla training by incorporating adversarially perturbed samples to improve adversarial robustness~\citep{madry2017towards}. AdversarialAugment~\citep{calian2021defending} generates adversarially corrupted examples for augmentation during model training. AdvProp~\citep{xie2020adversarial} extends the framework by considering the performance on both clean and adversarial examples, expressed as
\begin{equation} 
\label{eq:3}
\mathop{\arg \min} \limits_{\bm{\theta}} \mathbb{E}_{(\bm{x},y) \sim \mathbb{D}}(\mathcal{L}(\bm{\theta},\bm{x},y)
+\mathop{\arg \max} \limits_{\| \bm{\delta} \|_p \leq \epsilon}\mathcal{L}(\bm{\theta},\bm{x}+\bm{\delta},y)),
\end{equation}
in which $\bm{\delta}$ is the perturbation added to the clean image and $\epsilon$ is the constraint on the $L_p$-norm of $\bm{\delta}$.

However, most adversarial augmentation techniques primarily focus on defending against regular adversarial examples. We intend to apply these techniques to the OOD generalization problem, because OOD data may also exhibit adversarial behavior with different kinds of corruptions.

\section{On-manifold Adversarial Augmentation in the Frequency Domain}
In this section, we provide a detailed analysis of the advantages associated with on-manifold adversarial augmentation for enhancing generalization on OOD data. Furthermore, we delve into the definition of the manifold in the frequency domain.

\subsection{On-manifold Augmentation Improving OOD Generalization}
In supervised learning, an input image $\bm{x}\in\mathcal{X}$ is paired with a label $y\in\mathcal{Y}$ according to a predefined labeling rule. However, due to the inherent complexity of images, there exists a significant amount of redundant information. The prediction of the label is primarily influenced by a specific set of features $\mathcal{Z}$, while the remaining features $\mathcal{Z}_{redundant}$ are irrelevant to the prediction. The objective of many autoencoding-based methods is to learn an effective transformation that can capture and retain the primary features of an input image. Thus, it is reasonable to have an on-manifold assumption that there exists a projection from the original image space to the primary feature space, which not only reduces the dimensionality of the space but also preserves all the necessary information for classification. To be specific, the primary feature is obtained through a projection function as $\bm{z}=g(\bm{x}) \in \mathbb{R}^{d}$, where the dimension $d$ in the feature space is smaller than the dimension $d_0$ in the data space, and the loss function $\mathcal{L}(\bm{\theta},\bm{x},y)=\mathcal{L}(\bm{\theta}, g^{-1}(g(\bm{x})),y)$ ensures that the reconstructed input retains the same information as the original input for classification. The loss function is commonly assumed as a continuous, differentiable, and bounded function, \ie, $0 \leq \mathcal{L}(\bm{\theta}, \bm{x}, y) \leq M$ for constant $M$. The expected loss of a given data distribution $P$ and the label distribution $P_{y \mid \bm{x}}$ is expressed as $\mathbb{E}_{\bm{x}\sim P} [\mathbb{E}_{y\sim P_{y \mid \bm{x}}} [\mathcal{L}(\bm{\theta}, \bm{x}, y)]]$, in which $\mathbb{E}_{y\sim P_{y \mid \bm{x}}} [\mathcal{L}(\bm{\theta}, \bm{x}, y)]$ can be denoted as $\ell (\bm{\theta}, \bm{x})$. In this way, on-manifold adversarial examples are generated by modifying the latent code in a lower-dimensional feature space. A model is robust to on-manifold adversarial examples when
\begin{equation}
\label{eq:4}
\mathbb{E}_{\bm{x}\sim P_0}\left[ \sup_{\Vert \bm{\delta}_z \Vert_{\infty} \leq \epsilon_z} \left\vert \ell(\bm{\theta}, g^{-1}(g(\bm{x})+\bm{\delta}_z))-\ell(\bm{\theta},\bm{x}) \right\vert \right] \leq \tau,
\end{equation}
in which, $\bm{\delta}_z$ is the perturbation noise in the latent space, $\epsilon_z$ represents the maximum perturbation range in the latent space, $\tau$ represents the upper bound of the robustness error.

Besides, the OOD data refers to a distribution that deviates from the original distribution, and the degree of this deviation can be measured using the Wasserstein distance, as defined in~\citep{arjovsky2017wasserstein}. The OOD data resides within a distribution set $\mathcal{P}(P_0,\epsilon)=\{ P: W_p(P_0, P) \leq \epsilon\}$, in which $P_0$ is the training set distribution with compact support, $\epsilon$ represents the distance to the data distribution $P_0$ and $W_p$ represents the $W_p$-distance\footnote{The Wasserstein distance between $P$ and $Q$ can be defined as $W_{p}(P,Q)=\left(\inf_{\lambda \in \Lambda(P,Q)} \mathbb{E}_{\lambda}(\|x-y\|^{p})\right)^{1/p}$, in which $\Lambda(P,Q)$ is the coupling distribution of $P$ and $Q$, $x$ is sampled from $P$ and $y$ is sampled from $Q$.} with $p \in \{2, \infty\}$. We mainly focus on $p=\infty$. Considering $W_\infty$-distance, the generalization error to OOD data in $\mathcal{P}(P_0,\epsilon)$ can be defined as
\begin{equation}
\label{eq:5}
\gamma(\epsilon, \infty) = \left\vert \sup_{P\in \mathcal{P}(P_0,\epsilon)}\mathbb{E}_{\bm{x}'\sim P}\left[\ell (\bm{\theta}, \bm{x'})\right] - \mathbb{E}_{\bm{x}\sim P_{0}}\left[\ell (\bm{\theta}, \bm{x})\right] \right\vert,
\end{equation}
in which, $\epsilon$ represents the distribution deviation, $\bm{x'}$ is a sample from a biased distribution $P$. Since the real data distribution $P_0$ is unknown, the problem can be simplified by replacing $P_0$ with the empirical distribution $P_n$\footnote{The empirical distribution is $P_n(\cdot)=\frac{1}{n}\sum_{i=1}^{n}\bm{1}_{\{\cdot =x_i\}}$}.

Acquiring a model that demonstrates resilience to perturbations is relatively easier compared to obtaining a model that exhibits resistance to distribution bias. Thus, we can replace the problem of OOD generalization with the task of enhancing robustness against on-manifold perturbations by establishing a connection between the loss incurred by OOD data and that incurred by on-manifold adversarial examples. By combining the assumption of on-manifold behavior with the definition of OOD, we can derive the first lemma as
\begin{lemma}
\label{lemma:1}
    For $\bm{\theta}$ and $\epsilon$, we have
    \begin{equation}
    \label{eq:6}
        \sup_{P \in \mathcal{P}(P_0,\epsilon)}\mathbb{E}_{\bm{x}'\sim P}\left[\ell (\bm{\theta}, \bm{x'})\right]
        \leq
        \mathbb{E}_{\bm{x}\sim P_{0}}\left[\sup_{\|\bm{\delta}_z\|_\infty\leq \epsilon_z}\ell (\bm{\theta}, g^{-1}(g(\bm{x})+\bm{\delta}_z))\right],
    \end{equation}
    in which, $\bm{x} \in \mathbb{R}^{d_0}$, $\bm{z}=g(\bm{x}) \in \mathbb{R}^{d}$ and $\epsilon_z=\sup_{\|\bm{\delta}_x\|_\infty\leq \epsilon}\|g(\bm{x})-g(\bm{x}+\bm{\delta}_x)\|_\infty$.
\end{lemma}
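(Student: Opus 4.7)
The plan is to reduce the worst-case OOD expectation on the left-hand side to a per-sample on-manifold adversarial supremum by (i) realising each OOD distribution as a small perturbation of the training distribution via an $W_\infty$-coupling, and (ii) pushing that data-space perturbation through $g$ into a latent-space perturbation of radius at most $\epsilon_z$.

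First I would invoke the coupling characterisation of the $W_\infty$-distance: for any $P\in\mathcal{P}(P_0,\epsilon)$ and any $\eta>0$ there exists a coupling $\lambda$ of $(P_0,P)$ supported on $\{(\bm{x},\bm{x}'): \|\bm{x}-\bm{x}'\|_\infty\leq \epsilon\}$ (up to $\eta$, which we send to $0$). Writing $\bm{x}'=\bm{x}+\bm{\delta}_x$ with $\|\bm{\delta}_x\|_\infty\leq\epsilon$ then lets me rewrite
\begin{equation*}
\mathbb{E}_{\bm{x}'\sim P}\bigl[\ell(\bm{\theta},\bm{x}')\bigr]
=\mathbb{E}_{(\bm{x},\bm{x}')\sim\lambda}\bigl[\ell(\bm{\theta},\bm{x}+\bm{\delta}_x)\bigr].
\end{equation*}

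Next I would use the on-manifold assumption, namely $\ell(\bm{\theta},\bm{x}')=\ell(\bm{\theta}, g^{-1}(g(\bm{x}')))$, so that $\ell(\bm{\theta},\bm{x}+\bm{\delta}_x)=\ell(\bm{\theta}, g^{-1}(g(\bm{x})+\bm{\delta}_z))$ with $\bm{\delta}_z:=g(\bm{x}+\bm{\delta}_x)-g(\bm{x})$. By the defining identity $\epsilon_z=\sup_{\|\bm{\delta}_x\|_\infty\leq\epsilon}\|g(\bm{x})-g(\bm{x}+\bm{\delta}_x)\|_\infty$, this induced latent perturbation satisfies $\|\bm{\delta}_z\|_\infty\leq\epsilon_z$. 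Hence, pointwise in $\bm{x}$,
\begin{equation*}
\ell(\bm{\theta},\bm{x}+\bm{\delta}_x)\leq \sup_{\|\bm{\delta}_z'\|_\infty\leq\epsilon_z}\ell\bigl(\bm{\theta}, g^{-1}(g(\bm{x})+\bm{\delta}_z')\bigr).
\end{equation*}

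Finally I would take the expectation over the coupling $\lambda$; since the right-hand side depends only on $\bm{x}$, the marginal in $\bm{x}$ is $P_0$ and the bound becomes independent of $P$, so taking $\sup_{P\in\mathcal{P}(P_0,\epsilon)}$ on the left yields the claimed inequality. The main obstacle is the measure-theoretic justification of the coupling step, i.e.\ making sure an $\epsilon$-supported coupling actually exists for $W_\infty$ (this is standard when $P_0$ has compact support, as assumed, but one may need an auxiliary $\eta$-approximation and a limiting argument together with boundedness of $\ell$ by $M$ to pass to the limit); the subsequent algebra is immediate from the assumption on $g$ and the definition of $\epsilon_z$.
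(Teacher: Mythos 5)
Your proof is correct, and it shares the paper's central idea (realise any $P$ in the $W_\infty$-ball through a coupling with $P_0$ supported on $\epsilon$-close pairs, push the data-space displacement through $g$ to get a latent perturbation of size at most $\epsilon_z$, and use the on-manifold identity $\ell(\bm{\theta},\bm{x}')=\ell(\bm{\theta},g^{-1}(g(\bm{x}')))$), but it finishes differently. The paper first fixes a maximiser $P^*_\epsilon$ of the left-hand side (invoking Kolmogorov's theorem to realise it as a random vector $\bm{v}$ with $\|\bm{v}-\bm{x}\|_\infty\le\epsilon$ a.s.), bounds the resulting expectation by the supremum over the latent-space Wasserstein ball $\mathcal{P}(P_z,\epsilon_z)$, and then cites the equality from \citet{yi2021improved} to convert that distributional supremum into the expectation of the pointwise supremum. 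You instead bound uniformly over all $P$ and use a direct pointwise estimate $\ell(\bm{\theta},g^{-1}(g(\bm{x})+\bm{\delta}_z))\le\sup_{\|\bm{\delta}_z'\|_\infty\le\epsilon_z}\ell(\bm{\theta},g^{-1}(g(\bm{x})+\bm{\delta}_z'))$ under the coupling, then marginalise to $P_0$; only the "$\le$" direction is ever needed, so the detour through $\mathcal{P}(P_z,\epsilon_z)$ and the external equality is unnecessary. Your route is therefore more elementary and self-contained, and it sidesteps the attainment of the supremum over $P$ (a step the paper handles somewhat loosely), at the cost of having to justify the existence of an $\epsilon$-supported $W_\infty$-coupling, which you correctly flag and which is standard for compactly supported $P_0$ (or handled by your $\eta$-approximation and the boundedness of $\ell$). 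Both arguments share the remaining, tacitly assumed technicalities: measurability of the inner supremum and the uniform (in $\bm{x}$) reading of $\epsilon_z$.
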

\begin{proof}
    Let $P^*_\epsilon \in \arg\max_{P \in \mathcal{P}(P_0,\epsilon)}\mathbb{E}_{\bm{x}'\sim P}\left[\ell (\bm{\theta}, \bm{x'})\right]$. According to Kolmogorov’s theorem, $P^*_\epsilon$ can be a distribution of some random vector $\bm{v}$. Thus, we have
    \begin{equation}
    \label{eq:7}
        \sup_{P \in \mathcal{P}(P_0,\epsilon)}\mathbb{E}_{\bm{x}'\sim P}\left[\ell (\bm{\theta}, \bm{x'})\right]=\mathbb{E}_{\bm{v}\sim P^*_{\epsilon}}\left[\ell(\bm{\theta},\bm{v})\right].
    \end{equation}
    For $\bm{x}$ sampled from $P_0$, the random vector satisfies $\|\bm{v}-\bm{x}\|_\infty\leq \epsilon$ almost surely, which is a consequence of the definition of the $W_{\infty}$-distance. Besides, the projection function $g(\cdot)$ and its inverse $g^{-1}(\cdot)$ are both continuous. For any $\bm{v}\sim P^*_{\epsilon}$ and $\bm{x}\sim P_0$, we have
    \begin{equation}
    \label{eq:8}
        \|g(\bm{x})-g(\bm{v})\|_\infty\leq\sup_{\|\bm{v}-\bm{x}\|_\infty\leq \epsilon}\|g(\bm{x})-g(\bm{x}+(\bm{v}-\bm{x})))\|_\infty=\epsilon_z.
    \end{equation}
    This implies that if the random vector $\bm{v}$ lies within a norm ball centered at $\bm{x}$, the projection $g(\bm{v})$ also lies within a norm ball centered at $g(\bm{x})$ but with a different radius.
    
    We can construct a distribution $P_z$ in the feature space with $\bm{z}=g(\bm{x})$. Based on the basic assumption that OOD data lies on the manifold, the samples $\bm{v}$ from $P^*_{\epsilon}$ also satisfy the on-manifold property. So, we have
    \begin{align}
        \label{eq:9}
        \mathbb{E}_{\bm{v}\sim P^*_{\epsilon}}\left[\ell(\bm{\theta},\bm{v})\right]
        &=\mathbb{E}_{\bm{v}\sim P^*_{\epsilon}}\left[\ell(\bm{\theta},g^{-1}(g(\bm{v})))\right] \nonumber \\
        &\leq\sup_{P' \in \mathcal{P}(P_z,\epsilon_z)}\mathbb{E}_{\bm{z}'\sim P'}(\ell \left[\bm{\theta}, g^{-1}(\bm{z'}))\right].
    \end{align}
    Referring to the conclusion in~\citep{yi2021improved}, we have
    \begin{align}
    \label{eq:10}
        \sup_{P' \in \mathcal{P}(P_z,\epsilon_z)}\mathbb{E}_{\bm{z}'\sim P'}&\left[\ell (\bm{\theta}, g^{-1}(\bm{z'}))\right] \nonumber \\
        &=
        \mathbb{E}_{\bm{z}\sim P_{z}}\left[\sup_{\|\bm{\delta}_z\|_\infty\leq \epsilon_z}\ell (\bm{\theta}, g^{-1}(\bm{z}+\bm{\delta}_z))\right].
    \end{align}
    Together with Eq.~\ref{eq:7}, Eq.~\ref{eq:9} and Eq.~\ref{eq:10}, we get the conclusion.
\end{proof}

Lemma~\ref{lemma:1} demonstrates that the expected loss for OOD data is upper-bounded by the expected loss for on-manifold perturbations. It should be noted that these two losses are not equivalent since the perturbation in the feature space is actually enlarged. One might be concerned whether the enlarged perturbation range can make the optimization process more challenging. However, due to the limited range of perturbation, the data manifold can still be locally Euclidean. Moreover, the projection into the feature space preserves the original local structure. Hence, the impact of the enlarged perturbation range has a limited effect on the optimization process.

With the assistance of the relationship established in Lemma~\ref{lemma:1}, we can derive a quantifiable upper bound for the OOD generalization error, which is expressed in Theorem~\ref{th1} as

\begin{theorem}[OOD Upper-bound for Models Robust on On-manifold Adversarial Examples]
\label{th1}
If a model is robust to on-manifold adversarial examples given $2\epsilon_z$, $\tau$ and $p=\infty$, then for any $\epsilon \leq \epsilon_z$ with probability at least $1-e^t$,
\begin{equation}
\label{eq:11}
    \gamma(\epsilon,\infty) \leq \tau+M \sqrt{\frac{c \cdot N_z-2t}{n}},
\end{equation}
in which, $N_z=(2d)^{\frac{2D}{\epsilon_z^2}+1}$, $d$ is the dimension of the latent space, $D$ is the diameter of the compact support of feature space, $c=\log 2$ is a constant and $M$ is the upper bound of the loss function $\ell(\bm{\theta},g^{-1}(\bm{z}))$.
\end{theorem}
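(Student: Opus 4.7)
The plan is to chain three bounds: Lemma~\ref{lemma:1} (which routes OOD risk through on-manifold perturbations), the on-manifold robustness hypothesis (which contributes the $\tau$ term), and a Hoeffding-plus-covering argument (which contributes the $M\sqrt{(cN_z-2t)/n}$ term).

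First, I would unfold $\gamma(\epsilon,\infty)$ from Eq.~\ref{eq:5}. Because $\epsilon\leq\epsilon_z$, Lemma~\ref{lemma:1} applies and lets me upper bound the supremum over $\mathcal{P}(P_0,\epsilon)$ by the on-manifold expression $\mathbb{E}_{\bm{x}\sim P_0}[\sup_{\|\bm{\delta}_z\|_\infty\leq\epsilon_z}\ell(\bm{\theta},g^{-1}(g(\bm{x})+\bm{\delta}_z))]$. Subtracting $\mathbb{E}_{\bm{x}\sim P_0}[\ell(\bm{\theta},\bm{x})]$ on both sides collapses the gap to
\[
\gamma(\epsilon,\infty)\;\leq\;\mathbb{E}_{\bm{x}\sim P_0}\Bigl[\,\sup_{\|\bm{\delta}_z\|_\infty\leq\epsilon_z}\ell(\bm{\theta},g^{-1}(g(\bm{x})+\bm{\delta}_z))-\ell(\bm{\theta},\bm{x})\Bigr].
\]

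Second, since the on-manifold robustness hypothesis is naturally stated on the training sample (i.e., under the empirical measure $P_n$) at the enlarged radius $2\epsilon_z$, I would split the right-hand side above into (a) the same expectation under $P_n$, which is at most $\tau$ by the hypothesis, plus (b) a $(P_n,P_0)$ fluctuation term. The doubling of the robustness radius is precisely what lets an $\epsilon_z$-scale covering of the perturbation ball be absorbed in the next step without inflating $\tau$.

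Third, to control the fluctuation term I would introduce an $\epsilon_z$-net of the perturbation ball in the $d$-dimensional latent space (whose compact support has diameter $D$); the key combinatorial fact is that such a net can be built with cardinality at most $N_z=(2d)^{2D/\epsilon_z^2+1}$ via a Maurey/Pisier-style sparse approximation by $O(D/\epsilon_z^2)$-term convex combinations of $2d$ axis vertices. For each net element the function $\bm{x}\mapsto\ell(\bm{\theta},g^{-1}(g(\bm{x})+\bm{\delta}_z))$ takes values in $[0,M]$, so Hoeffding's inequality bounds its empirical-versus-population deviation. A union bound over the net with total failure probability $e^t$ then yields the concentration term $M\sqrt{(cN_z-2t)/n}$ with $c=\log 2$, and summing the two contributions gives Eq.~\ref{eq:11}.

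The main obstacle is the covering step. The non-standard form $(2d)^{2D/\epsilon_z^2+1}$, with the latent dimension $d$ appearing in the base rather than the exponent, is not the naive volumetric $(\mathrm{diam}/\epsilon)^d$ bound and requires the Maurey-type sparse-approximation estimate together with a careful matching between the $\epsilon_z$-granularity of the net and the $2\epsilon_z$-robustness slack. Verifying that the Lipschitz behavior of $\bm{\delta}_z\mapsto\ell(\bm{\theta},g^{-1}(g(\bm{x})+\bm{\delta}_z))$ absorbs the residual net discrepancy without spilling into $\tau$ is the most delicate piece of the proof.
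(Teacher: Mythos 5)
Your skeleton (Lemma~\ref{lemma:1} gives the on-manifold supremum, the $2\epsilon_z$-robustness hypothesis gives $\tau$, a covering-plus-concentration step gives the $M\sqrt{(cN_z-2t)/n}$ term) matches the paper at the coarsest level, but the concentration mechanism you propose is not the one that produces Eq.~\eqref{eq:11}, and it contains a gap you flag but do not close. First, the quantity $N_z=(2d)^{2D/\epsilon_z^2+1}$ is not the size of a net of the perturbation ball: in the paper (following the cited result of Yi et al.) it bounds the $\epsilon_z$-covering number of the latent \emph{data support} $\mathcal{Z}$, whose diameter is $D$; a ball of radius $\epsilon_z$ covered at scale $\epsilon_z$ needs only $O(1)$ points and would never involve $D$. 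That cover is turned into a disjoint partition $\{C_1,\dots,C_{N_z}\}$ of the latent support, and the doubling to $2\epsilon_z$ arises because a population point $\bm{z}$ and a training point $\bm{z}_j$ lying in the same cell are within $\epsilon_z$ of each other, so an $\epsilon_z$-perturbation of $\bm{z}$ is a $2\epsilon_z$-perturbation of $\bm{z}_j$ --- not because a net granularity gets absorbed into the perturbation radius. This cell-wise pairing is exactly what lets the paper avoid any Lipschitz control of $\bm{\delta}_z\mapsto\ell(\bm{\theta},g^{-1}(\bm{z}+\bm{\delta}_z))$: the supremum over the continuum of perturbations is never discretized, it is compared directly to sample losses via the robustness hypothesis. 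Your route instead needs the supremum over the $\epsilon_z$-ball to be replaced by a maximum over finitely many perturbations, and without a Lipschitz or modulus-of-continuity assumption (which the paper never makes) that reduction does not go through; this is precisely the ``delicate piece'' you leave open.

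Second, even granting such a net, Hoeffding plus a union bound over $N_z$ events yields a deviation of order $M\sqrt{(\log N_z+\log 2-t)/(2n)}$, i.e.\ $\log N_z$ inside the square root, which is not the stated $M\sqrt{(cN_z-2t)/n}$ with $N_z$ appearing linearly. The linear-in-$N_z$ term in Eq.~\eqref{eq:11} comes from a different inequality: after the cell-wise comparison, the residual is $M\sum_{k}\left\vert P_0(C_k)-N_{C_k}/N\right\vert$, the total-variation discrepancy of a multinomial over $N_z$ cells, which is bounded with probability $1-e^{t}$ by the classical bound obtained from a union over the $2^{N_z}$ subsets of cells (the Wellner reference in the appendix), and that is where $c\,N_z$ with $c=\log 2$ originates. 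A side benefit your per-function Hoeffding forfeits is that this multinomial discrepancy does not depend on $\bm{\theta}$ or $\ell$, so the concentration event is uniform over models. To repair the argument, replace the net over perturbations by an $\epsilon_z$-partition of the latent support, use the $2\epsilon_z$-robustness to compare population points with training points sharing their cell, and control the leftover cell-frequency discrepancy with the multinomial concentration bound; as written, your step (3) neither yields the claimed form nor is justified without an extra Lipschitz hypothesis.
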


Detailed proof of the theorem will be provided in Appendix~A. From Theorem~\ref{th1}, we have the following observations:
\begin{itemize}
    \item Robustness to on-manifold adversarial examples is highly related to generalization on OOD data. By utilizing adversarial augmentation with on-manifold adversarial examples, we can effectively enhance OOD generalization, as the upper bound of OOD performance is constrained by on-manifold adversarial robustness.
    \item Compared with robustness to regular adversarial examples as defined in~\citep{yi2021improved}, on-manifold adversarial robustness establishes a more conservative upper bound on OOD generalization. This is primarily due to the reduced dimensionality of the latent code in the latent space compared to the original image space. Consequently, adversarial augmentation using on-manifold adversarial examples has a greater potential to enhance OOD generalization compared to regular adversarial augmentation.
\end{itemize}

\subsection{Manifold Representation in the Frequency Domain}
Despite the advantages of training models with on-manifold adversarial examples, generating such samples is a challenging task. Previous work~\citep{lin2020dual,stutz2019disentangling} has tried to approximate the data manifold with VAE models. The key idea is to train a VAE with an encoder function $g(\bm{x})$, which projects the original images into a latent code $\bm{z}$ and a decoder function $g^{-1}(\bm{z})$, which reconstructs the images from the latent code. However, approximating the data manifold using VAE models places high demands on their representational capacity, making it challenging to implement, especially for large-scale datasets. Additionally, training VAE networks requires significant computational resources and specific techniques. Therefore, it becomes necessary to replace the VAE implementation with a more stable and efficient alternative.

\begin{figure}[t]
  \centering
  \includegraphics[width=0.85\linewidth]{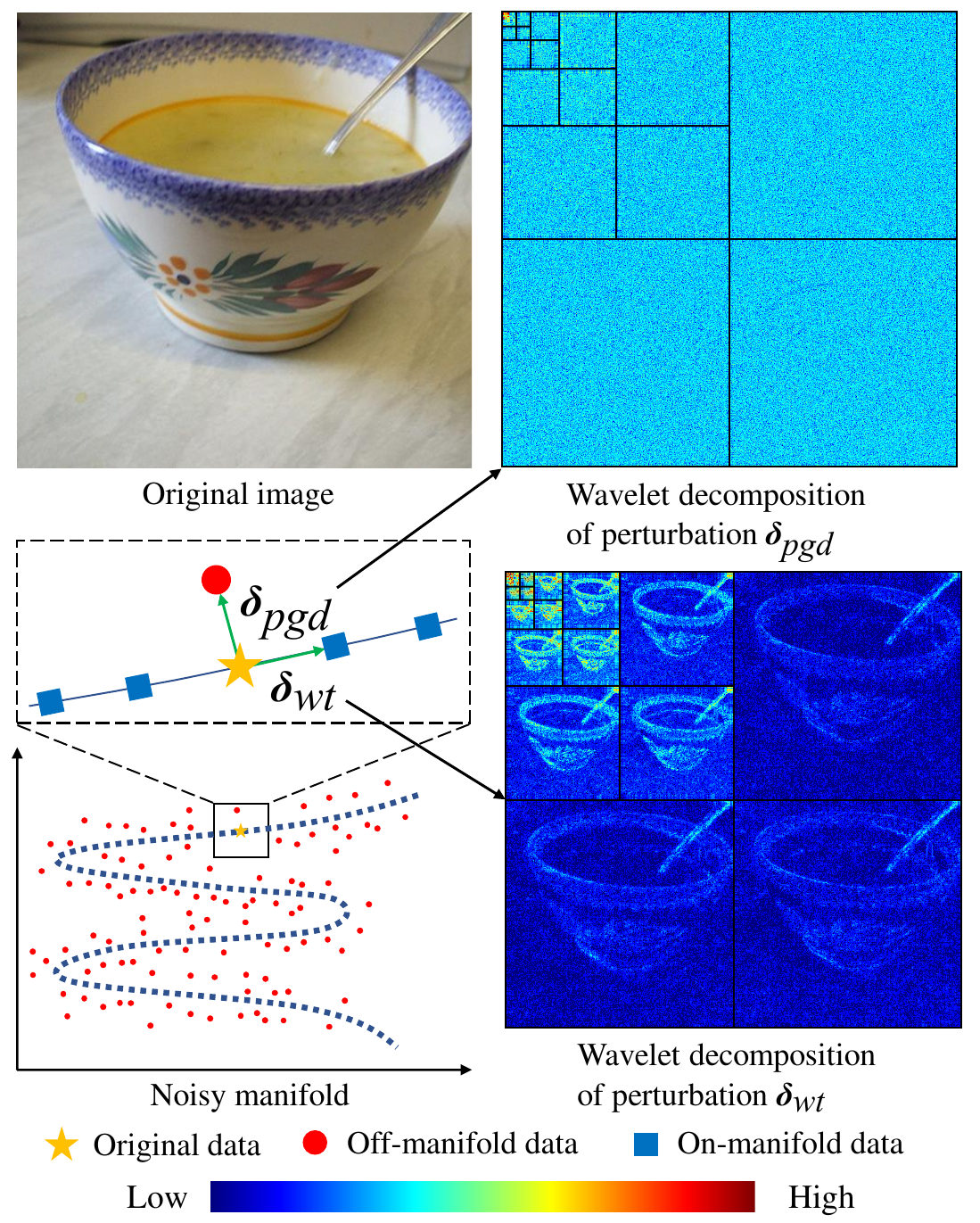}
  \caption{Illustration of on-manifold and off-manifold adversarial examples in the wavelet domain. Given the original image on the data manifold (denoted as a yellow star), a PGD attacker generates an off-manifold adversarial example (denoted as a red point) with a perturbation $\bm{\delta}_{pgd}$; our method generates an on-manifold adversarial example (denoted as a blue box) with a perturbation $\bm{\delta}_{f}$ parallel to the manifold. The wavelet decomposition of these two perturbations demonstrates that $\bm{\delta}_{f}$ is more related to the semantic meaning, while $\bm{\delta}_{pgd}$ is noise-like. 
  }
  \label{fig:2}
\end{figure}

We have noticed that projecting images into the frequency domain has a similar effect as VAE. Taking the wavelet domain as an example, the wavelet transform $\mathcal{W}(\cdot)$ and its inverse $\mathcal{W}^{-1}(\cdot)$ can be considered analogous to the encoder $g(\cdot)$ and decoder $g^{-1}(\cdot)$. Additionally, the wavelet coefficients reside in a lower-dimensional space due to the sparsity typically observed in the wavelet domain of natural images, similar to the latent code $\bm{z}$ in the latent space. Approximating the data manifold in the frequency domain offers convenience and efficiency, as the wavelet transform does not require pre-training and consumes less computational time.

We propose to construct a primary feature space containing only the non-sparse coefficients in the wavelet domain. This subspace captures the majority of the signal's energy, implying that any modifications made within this subspace will have a significant impact on semantic manipulation. With a threshold $T$ and input $\bm{x}$, the latent feature can be expressed as
\begin{equation}
\label{eq:12}
  \mathcal{Z}(\bm{x},T)=\{ \langle \bm{\psi}_i,\bm{x} \rangle \mid \langle \bm{\psi}_i,\bm{x} \rangle \geq T,\bm{\psi}_i \in \Psi \},
\end{equation}
in which, $\Psi$ contains all the wavelet bases given the mother wavelet; and $T$ is a predefined threshold to filter out the sparse coefficients. In this way, the manifold projection function is defined as a composition of the wavelet transform $\mathcal{W}$ and a filtering operation that selects the non-sparse coefficients using an indicator function $\mathcal{F}(\bm{z})=\bm{z} \cdot \bm{1}_{\{ z_i \geq T \} }$, \ie, $g(\bm{x})=\mathcal{F} \circ \mathcal{W}(\bm{x})$. The inverse of $g(\bm{x})$ corresponds to the inverse wavelet transform, \ie, $g^{-1}(\bm{z})=\mathcal{W}^{-1}(\bm{z})$. As shown in Fig.~\ref{fig:2}, regular adversarial attacks generate off-manifold adversarial examples with noise-like patterns in the frequency domain. The perturbation $\bm{\delta}_{pgd}$ introduced by a PGD attacker captures the gradient of a classifier but lacks semantic meaning. We can obtain an on-manifold perturbation $\bm{\delta}_{f}$ by projecting $\bm{\delta}_{pgd}$ onto the pre-defined manifold.

Then, we provide the definition of the models robust to on-manifold adversarial examples in the frequency domain as
\begin{equation}
\label{eq:13}
\mathbb{E}_{\bm{x}\sim P_0}\left[ \sup_{\Vert \bm{\delta}_f \Vert_{p} \leq \epsilon_f} \left\vert \ell(\bm{\theta}, \mathcal{W}^{-1}(\bm{z}_f+\bm{\delta}_f))-\ell(\bm{\theta},\bm{x}) \right\vert \right] \leq \tau,
\end{equation}
in which, latent vector $\bm{z}_f$ is obtained by $\bm{z}_f=\mathcal{F} \circ \mathcal{W}(\bm{x})$, $\epsilon_f$ represents the maximum perturbation range in the latent space, $\tau$ represents the upper bound of the expected loss. It is shown that the approximation of the data manifold in the frequency domain can be regarded as a special case of general manifold estimation, which means that models that are robust to on-manifold adversarial examples in the frequency domain also have an upper bound on OOD generalization.

\begin{figure*}[t]
  \centering
   \includegraphics[width=0.98\linewidth]{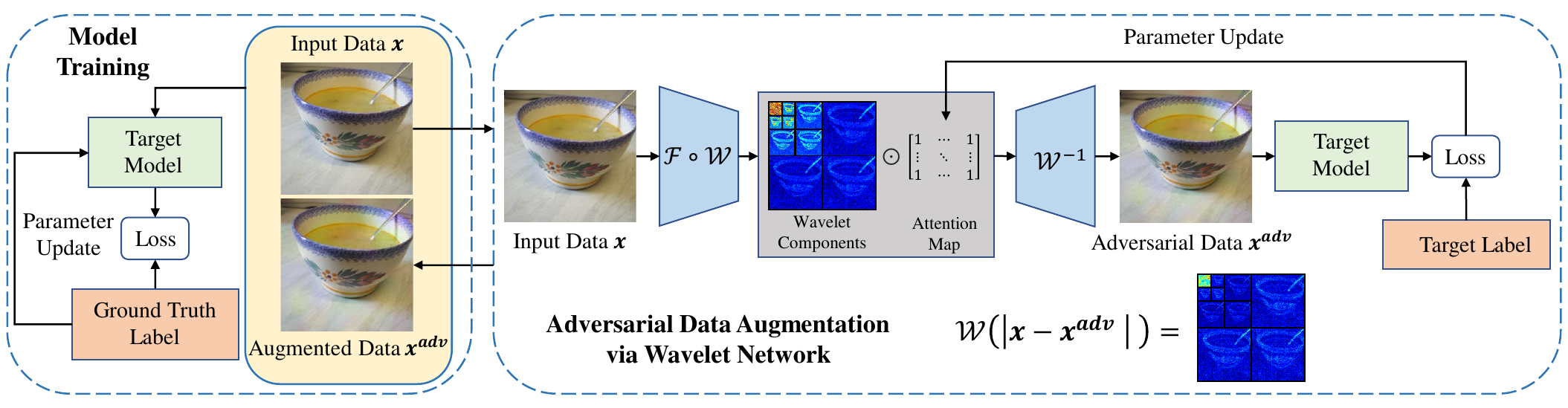}
  \vspace{-1ex}
   \caption{The overall pipeline of adversarial data augmentation with AdvWavAug, yielding an improved model generalization. To obtain on-manifold adversarial examples $\bm{x}^{adv}$, we send the input image $\bm{x}$ into an adversarial augmentation module, in which $\bm{x}$ is projected into the wavelet domain by wavelet transformation $\mathcal{W}$ and its inverse $\mathcal{W}^{-1}$. We design an attention map to receive the perturbations backpropagated from the loss. The wavelet decomposition of the perturbations ensures that AdvWavAug generates on-manifold adversarial examples. During model training, the original input $\bm{x}$ is augmented with the adversarial examples $\bm{x}^{adv}$, and the augmented data is used to train the target model.
   }
   \label{fig:3}
\end{figure*}

\section{Boosting Model Generalization via AdvWavAug}
\label{sec:4}
In this section, we introduce the training scheme combined with our on-manifold augmentation module AdvWavAug.
\subsection{Problem Formulation}
\label{sec:4.1}
The definition of on-manifold adversarial examples in the frequency domain requires determining the threshold value $T$ in Eq.~\ref{eq:12}. An alternative and more adaptive approach to identifying non-sparse coefficients is by replacing traditional additive perturbations with multiplicative perturbations. To generate on-manifold adversarial examples, we use the multiplicative perturbation in the frequency domain as
\begin{equation}
\label{eq:14}
    \mathop{\arg \max} \limits_{\|\Tilde{\bm{\delta}}_f\|_p \leq \Tilde{\epsilon}_f} \ell(\bm{\theta}, \mathcal{W}^{-1}(\bm{z}_f \odot (\bm{1}+\Tilde{\bm{\delta}}_f))),
\end{equation}
where $\bm{z}_f=\mathcal{F} \circ \mathcal{W}(\bm{x})$, $\mathcal{W}(\cdot)$ is the wavelet transform and $\mathcal{W}^{-1}(\cdot)$ is its inverse, $\bm{1}+\Tilde{\bm{\delta}}_f$ represents an adversarially modified attention map (wavelet components that contribute more to model degradation are amplified). It is noted that we introduce the adversarial perturbations in the frequency domain rather than the image domain, which facilitates preserving the on-manifold characteristics of the examples.

The perturbation range can be derived from a regular perturbation constraint by zeroing out all wavelet coefficients below the threshold $T$ as zeros, as follows
\begin{equation}
\label{eq:15}
    \| \Tilde{\bm{\delta}}_f \|_{p} \leq \frac{\| \mathcal{W}(\bm{\delta}) \|_{p}}{\sqrt[p]{n}T} \leq \frac{PQ\epsilon_f}{\sqrt[p]{n}T}=\Tilde{\epsilon}_f,
\end{equation}
in which $n$ is the total number of non-sparse coefficients, $P$ and $Q$ are positive constants. It is noted that $\Tilde{\epsilon}_f$ serves as a new upper bound for $\| \Tilde{\bm{\delta}}_f \|_{p}$. In practice, the perturbation $\| \Tilde{\bm{\delta}}_f \|_{p}$ that satisfies the constraint $\Tilde{\epsilon}_f$ is a sufficient condition. The assignment to each wavelet scale is determined accordingly. A more detailed analysis is provided in Appendix~B.

AdvProp~\citep{xie2020adversarial} has been shown to be an effective training scheme for improving model generalization with adversarial examples. However, the AdvProp framework has limited effectiveness with off-manifold adversarial examples because the perturbations are not semantically meaningful. To address this limitation, we enhance the AdvProp framework by introducing on-manifold adversarial augmentation, yielding the resultant problem formulation as
\begin{equation}
\label{eq:16}
\mathop{\arg \min} \limits_{\bm{\theta}} \mathbb{E}_{\bm{x} \sim P_0}\left[\ell(\bm{\theta},\bm{x})
+\mathop{\arg \max} \limits_{\Vert \Tilde{\bm{\delta}}_f \Vert_p \leq \Tilde{\epsilon}_f}\ell(\bm{\theta}, \mathcal{W}^{-1}(\bm{z}_f \odot (\bm{1}+\Tilde{\bm{\delta}}_f)))\right],
\end{equation}
where $\bm{z}_f=\mathcal{F} \circ \mathcal{W}(\bm{x})$ and clean data $\bm{x}$ is augmented with our on-manifold adversarial examples $\mathcal{W}^{-1}(\bm{z}_f \odot (\bm{1}+\Tilde{\bm{\delta}}_f))$. In this way, we can find the real pitfalls of a well-trained model on clean data distribution and improve the model generalization.

\subsection{Implementation Details}
\label{sec:4.2}
In this section, we will introduce the training process, which integrates our augmentation module AdvWavAug with the AdvProp structure. The whole network is shown in Fig.~\ref{fig:3}. The input image is first decomposed into different scales by the fast wavelet transformation proposed by~\citep{mallat1999wavelet}. Then, the wavelet coefficients are multiplied with an attention map, which acts as a frequency filter in signal processing. Next, the product of wavelet coefficients and the attention map is sent to the corresponding inverse wavelet transformation. The reconstructed adversarial examples are fed into the target model to calculate the gradient to the attention map. Finally, with the optimal attention map, we can generate on-manifold adversarial examples, which serve as intentional data augmentation to boost the model's generalization to OOD data. AdvWavAug is designed to project the gradient map onto the data manifold, making it a versatile module that can be integrated into other training frameworks. The detailed algorithm is shown in Algorithm~\ref{alg:A}.

\begin{algorithm}[H]
\caption{Pseudo code of adversarial training with AdvWavAug for $T_e$ epochs, $\alpha$ step size, $S$ adversarial steps and a dataset of size $N$}\label{alg:A}
\begin{algorithmic}
\STATE \textbf{Data:} A set of clean images $\mathcal{X}$ and labels $\mathcal{Y}$
\STATE \textbf{Result:} Network parameter $\bm{\theta}$
\FOR{$t=1\cdots T_e$}
\FOR{$n=1\cdots N$}
\STATE Sample $\bm{x}^c \in \mathcal{X}$ with label $y \in \mathcal{Y}$
\STATE $\bm{z}_f=\mathcal{W}(\bm{x}^c)$
\STATE $\Tilde{\bm{\delta}}_f=\bm{0}$
\FOR{$s=1\cdots S$}
\STATE $\Tilde{\bm{\delta}}_f=\Tilde{\bm{\delta}}_f+\alpha \cdot \nabla_{\Tilde{\bm{\delta}}_f}\ell(\bm{\theta}, \mathcal{W}^{-1}(\bm{z}_f \odot (\bm{1}+\Tilde{\bm{\delta}}_f)))$
\ENDFOR
\STATE $\bm{x}^a=\mathcal{W}^{-1}(\bm{z}_f \odot (\bm{1}+\Tilde{\bm{\delta}}_f))$;
\STATE $\bm{\theta}=\bm{\theta}-\nabla_{\bm{\theta}}(\ell(\bm{\theta},\bm{x}^c)+\ell(\bm{\theta},\bm{x}^a))$
\ENDFOR
\ENDFOR
\end{algorithmic}
\label{alg1}
\end{algorithm}

\section{Experiments}
\label{sec:5}

\subsection{Experiments Setup}

\subsubsection{Datasets}
To verify the effectiveness of our adversarial augmentation module, we conduct all training experiments using the standard ImageNet~2012~\citep{russakovsky2015imagenet} training dataset. In addition to evaluating the performance for clean accuracy on ImageNet~2012 validation dataset, we also evaluate the generalization on its distorted version for OOD data.
\begin{itemize}
\setlength{\itemsep}{0pt}
    \item ImageNet-A~\citep{hendrycks2021natural}: The ImageNet-A dataset consists of 7,500 real-world, unmodified, and naturally occurring examples, drawn from some challenging scenarios.
    \item ImageNet-C~\citep{hendrycks2019benchmarking}: The ImageNet-C dataset consists of 19 different corruption types grouped into noise, blur, weather, digital and extra corruption. And each type has five levels of severity
    \item ImageNet-R~\citep{hendrycks2021many}: The ImageNet-R dataset contains 16 different renditions (\eg art, cartoons, deviantart, \etc).
\end{itemize}

\begin{table}[h]
\begin{center}
\begin{minipage}{230pt}
\caption{Wavelet settings of the AdvWavAug module with different step sizes in different frequency bands.}
\label{tab:1}%
\begin{tabular}{@{}lccccccc@{}}
\toprule
& $H_{1}$ & $H_{2}$ & $H_{3}$ & $H_{4}$ & $H_{5}$ & $H_{6}$ & $L$ \\
\midrule
S1 & 0.50&0.07&0.05&0.03&0.02&0.010&0.001  \\
S2 & 0.40&0.06&0.04&0.03&0.02&0.010&0.001  \\
S3 & 0.30&0.05&0.04&0.03&0.02&0.015&0.015  \\
S4 & 0.10&0.30&0.05&0.03&0.02&0.010&0.010  \\
S5 & 0.09&0.09&0.13&0.15&0.17&0.150&0.150  \\
S6 & 0.09&0.09&0.09&0.11&0.13&0.150&0.170  \\
\bottomrule
\end{tabular}
\end{minipage}
\end{center}
\end{table}

\subsubsection{Architectures}
To verify the attack performance of AdvWavAug, we conduct model training on ResNet~\citep{he2016deep}, Swin Transformer~\citep{liu2021swin} and Vision Transformer~\citep{dosovitskiy2020image}. For ResNet, ResNet~50 (Res50), ResNet~101 (Res101), and ResNet~152 (Res152) are chosen to verify the performance across different model sizes. For Swin Transformer, we select the Swin Transformer Tiny (SwinT), Swin Transformer Small (SwinS), and Swin Transformer Base (SwinB). For Vision Transformer, we select the Vision Transformer Base (ViTB), Vision Transformer Large (ViTL), and Vision Transformer Huge (ViTH). In order to accommodate the transformer architectures that lack batch normalization layers, we replace the original batch normalization in AdvProp with layer normalization.

\subsubsection{Augmentation Module} 
We choose AdvWavAug as the default augmentation module to generate adversarial examples during training and compare its performance with three other types of augmentation modules: PGD adversarial augmentation (vanilla AdvProp), Gaussian noise augmentation and VQ-VAE augmentation~\citep{razavi2019generating}.

During model training with AdvProp, we used a one-step attack iteration for generating adversarial examples. The PGD attacker is set up with perturbation range $\epsilon=1/255$, number of iterations $n=1$, and attack step size $\alpha=1/255$.

During model training with Gaussian noise augmentation, we add Gaussian noise with mean 0.0 and std 0.001 to the clean images and take these augmented images as auxiliary inputs.

During model training with VQ-VAE augmentation, we first approximate the data manifold by training a VQ-VAE model based on ImageNet. Then, we generate on-manifold adversarial examples following the process in~\citep{stutz2019disentangling}. The perturbation noise is added in the latent space with a one-step attack and a step size of 0.007. Finally, the on-manifold images are sent into the auxiliary channel.

During model training with our AdvWavAug, we adopt wavelet decomposition with 6 layers and the sym8 mother wave as in~\citep{daubechies1993ten}. Considering that modifications in high-frequency bands may have a lower contribution to the overall visual quality, we balanced the perturbation ranges in different frequency bands by using larger step sizes in higher-frequency bands. Except for the ablation study of different wavelet settings, the basic setting of the step sizes during model training is Setting 3 (S3) in Tab.~\ref{tab:1}, in which $H_{1}$ to $H_{6}$ denote multi-scale decomposition from high to low-frequency bands, and $L$ denotes the lowest-frequency band. To illustrate the effect of balancing step sizes in different frequency bands, we gradually shift the attention area from higher to lower frequency bands, \ie, from Setting 1 (S1) to Setting 6 (S6) in Tab.~\ref{tab:1}.

\begin{table*}[h]\small
\begin{center}
\begin{minipage}{0.85\textwidth}
\caption{Comparison of generalization with different training methods on different datasets, including ImageNet, ImageNet-A ImageNet-R, and ImageNet-C. We compare the baseline model, Gaussian Augmentation, AdvProp, and AdvWavAug.}
\label{tab:2}
\begin{tabular*}{\textwidth}{@{\extracolsep{\fill}}llcccc@{\extracolsep{\fill}}}
\toprule%
Model & Method & ImageNet & ImageNet-A & ImageNet-R & ImageNet-C \\
\cmidrule{3-6}
&& Top-1 Acc. \textcolor{red}{$\uparrow$} & Top-1 Acc. \textcolor{red}{$\uparrow$} & Top-1 Acc. \textcolor{red}{$\uparrow$} & mCE \textcolor{red}{$\downarrow$} \\
\midrule
\multirow{4}*{Res50}
& Baseline &   76.3&   2.5&   35.9&   77.4\\
& Gaussian &   77.0&   2.5&   36.8&   75.7\\
& AdvProp  &   77.4&   3.5&   37.8&   69.9\\
& AdvWavAug&\bf77.5&\bf4.1&\bf39.3&\bf69.0\\
\midrule
\multirow{4}*{Res101}
& Baseline &   78.2&   5.5&   39.4&   70.8\\
& Gaussian &   78.7&   6.1&   40.4&   69.4\\
& AdvProp  &\bf79.2&   8.1&\bf42.2&   65.5\\
& AdvWavAug&\bf79.2&\bf8.6&\bf42.2&\bf63.7\\
\midrule
\multirow{4}*{Res152}
& Baseline &   78.8&    7.3&   40.6&   69.1\\
& Gaussian &   79.0&    8.6&   40.8&   67.3\\
& AdvProp  &   79.8&   11.4&   43.6&\bf62.3\\
& AdvWavAug&\bf80.5&\bf13.0&\bf45.9&   63.2\\
\midrule
\multirow{4}*{SwinT}
& Baseline &   78.6&   13.9&   38.1&   67.0\\
& Gaussian &   79.4&\bf16.6&   39.3&   63.2\\
& AdvProp  &   79.5&   15.3&   40.6&   61.9\\
& AdvWavAug&\bf79.7&   15.8&\bf44.9&\bf56.1\\
\midrule
\multirow{4}*{SwinS}
& Baseline &   80.6&   21.2&   41.0&   61.1\\
& Gaussian &   80.2&   20.3&   40.9&   58.2\\
& AdvProp  &   80.7&   20.3&   42.1&   56.9\\
& AdvWavAug&\bf81.6&\bf25.8&\bf47.7&\bf48.9\\
\bottomrule
\end{tabular*}
\end{minipage}
\end{center}
\end{table*}

\subsection{Adversarial Propagation with \textbf{AdvWavAug}}
\label{sec:5.2}
\subsubsection{Training Hyperparameters}
\label{sec:5.2.1}
For the AdvProp training of ResNets, we follow these settings: SGD optimizer with momentum 0.9 and weight decay 5e-5; initial learning rate 0.2 with cosine learning rate scheduler; epoch 105 including 5 warm-up epochs; batch size 256; random horizontal flipping and random cropping as the basic data augmentations. For AdvProp training of Swin Transformers, we follow the same basic training settings as in~\citep{liu2021swin}, except that 
the training epoch is set to 105 including 5 warm-up epochs, and batch size is set to 512 (for Swin-Tiny) and 256 (for Swin-Small) due to limited computing resources. Incidentally, we simply combine our proposed AdvWavAug and Augmix strategy with no Jensen-Shannon divergence consistency loss in Sec.~\ref{sec:5.2.4}.

For the normal adversarial training part in Sec.~\ref{sec:5.3.2}, to maintain training efficiency, we set up PGD attackers with $\epsilon=2/255$, number of iteration $n=1$, and attack step size $\alpha=2/255$. Then we set up the one-step AdvWavAug attackers with competitive attack success rates.

For the VQ-VAE model training, we adopt the settings in~\citep{razavi2019generating}: mean square error loss; latent loss weight 0.25; learning rate 3e-4 with cycle scheduler; epoch 560; batch size 256.

For model fine-tuning with MAE, we adopt the settings in~\citep{he2021masked}.

\begin{figure}[!t]
  \centering
  \includegraphics[width=0.98\linewidth]{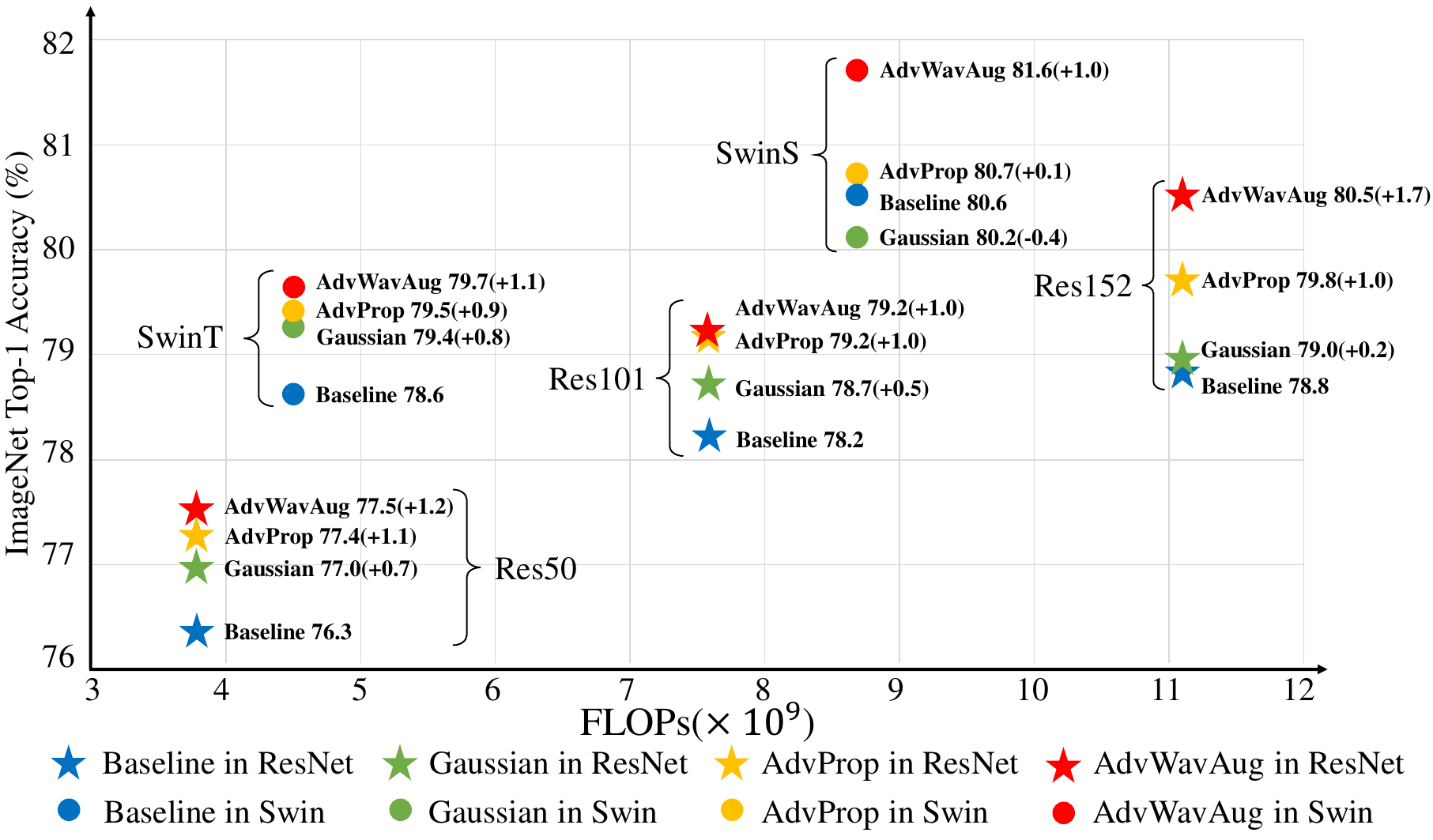} 
  \caption{Generalization comparison on ImageNet. AdvWavAug boosts model performance over the original AdvProp with PGD attacker on ImageNet. Improvement to larger models is more significant. Our best result is based on Swin-Small (Swin-S), \ie, 81.6\% Top-1 Acc. on ImageNet.} 
  \label{fig:advprop_clean} 
\end{figure}

\subsubsection{Comparison with Gaussian Augmentation and Vanilla AdvProp}
In this part, we compare our method with two typical augmentations: Gaussian noise augmentation (transform-based) and AdvProp (adversary-based). To ensure a fair comparison, we modify the training epochs of Swin Transformers to match the epochs of ResNets. We train all the models according to the training hyperparameters in Sec.~\ref{sec:5.2.1}.

\noindent \textbf{ImageNet Validation Results} In Fig.~\ref{fig:advprop_clean}, we have shown the experimental results on the ImageNet validation set. We verify our method on ResNet and Swin Transformer. As observed, our proposed AdvWavAug method outperforms both Gaussian augmentation and the original AdvProp with PGD on all tested architectures and model sizes. These results validate our fundamental belief that on-manifold adversarial examples can improve model generalization better than off-manifold adversarial examples. This is because on-manifold adversarial examples improve generalization by directly finding the real pitfalls of a model to the data manifold, instead of introducing noise-like adversarial patterns.

In addition, we observe that the performance improvement is correlated with the model's capacity. Our method has a greater improvement on models with larger capacities. For example, a ResNet 152 trained with our method achieves 80.5\% Top-1 Acc., resulting in a 0.7\% performance gain over AdvProp and a 1.7\% gain over the vanilla training baseline.

\noindent \textbf{Generalization on OOD Datasets}
We also evaluate the models on more challenging distorted ImageNet datasets: ImageNet-A, ImageNet-C, and ImageNet-R. The results are summarized in Tab.~\ref{tab:2}. Based on the results of Gaussian augmentations, it is evident that adversary-based augmentations generally outperform a single type of corrupted augmentation. This is because adversary-based augmentations explore the boundary of a classifier, benefiting generalization on unseen corruptions, while the improvement of Gaussian noise augmentation is minor. Considering the limited improvement from Gaussian augmentations, we focus on comparing our method with the vanilla AdvProp. Taking Swin-Small as an example, AdvWavAug achieves the best performance with performance gains of 5.5\% on ImageNet-A, 8.0\% on ImageNet-C, and 5.6\% on ImageNet-R, compared with the original AdvProp. These results demonstrate that our AdvWavAug improves model generalization on these distorted datasets, indicating that on-manifold adversarial examples can further enhance generalization on OOD data. The results support our claim that models robust to on-manifold adversarial examples have an upper bound on OOD samples, which is commonly smaller compared with models robust to off-manifold examples.

\begin{table*}[h]\small
\begin{center}
\begin{minipage}{0.85\textwidth}
\caption{Comparison of generalization with AdvWavAug and VQ-VAE augmentation on different datasets, including ImageNet, ImageNet-A, ImageNet-R and ImageNet-C. The training settings of Res50, Res101, Res152, SwinT and SwinS are based on the aligned ones.}
\label{tab:3}
\begin{tabular*}{\textwidth}{@{\extracolsep{\fill}}llcccc@{\extracolsep{\fill}}}
\toprule%
Model & Method & ImageNet & ImageNet-A & ImageNet-R & ImageNet-C \\
\cmidrule{3-6}
&& Top-1 Acc. \textcolor{red}{$\uparrow$} & Top-1 Acc. \textcolor{red}{$\uparrow$} & Top-1 Acc. \textcolor{red}{$\uparrow$} & mCE \textcolor{red}{$\downarrow$} \\
\midrule
\multirow{2}*{Res50}
& VQ-VAE   &   77.4&\bf4.5&\bf41.0&\bf68.0 \\
& AdvWavAug&\bf77.5&   4.1&   39.3&   69.0 \\
\midrule
\multirow{2}*{Res101}
& VQ-VAE   &\bf79.2&\bf10.2&\bf44.1&\bf62.4 \\
& AdvWavAug&\bf79.2&    8.6&   42.2&   63.7 \\
\midrule
\multirow{2}*{Res152}
& VQ-VAE   &   80.3&\bf14.5&\bf47.1&\bf58.2 \\
& AdvWavAug&\bf80.5&   13.0&   45.9&   63.2 \\
\midrule
\multirow{2}*{SwinT}
& VQ-VAE   &   79.4&   15.4&\bf45.7&   57.0 \\
& AdvWavAug&\bf79.7&\bf15.8&   44.9&\bf56.1 \\
\midrule
\multirow{2}*{SwinS}
& VQ-VAE   &\bf81.7&\bf27.5&\bf50.7&   49.1 \\
& AdvWavAug&   81.6&   25.8&   47.7&\bf48.9 \\
\bottomrule
\end{tabular*}
\end{minipage}
\end{center}
\end{table*}

\begin{table*}[h]\small
\begin{center}
\begin{minipage}{0.85\textwidth}
\caption{Comparison of generalization with AdvWavAug and VQ-VAE augmentation on different datasets, including ImageNet, ImageNet-A, ImageNet-R and ImageNet-C. The training settings of SwinT, SwinS and SwinB are based on the standard ones.}
\label{tab:4}
\begin{tabular*}{\textwidth}{@{\extracolsep{\fill}}llccccc@{\extracolsep{\fill}}}
\toprule%
Model & Method & ImageNet & ImageNet-A & ImageNet-R & ImageNet-C \\
\cmidrule{3-6}
&& Top-1 Acc. \textcolor{red}{$\uparrow$} & Top-1 Acc. \textcolor{red}{$\uparrow$} & Top-1 Acc. \textcolor{red}{$\uparrow$} & mCE \textcolor{red}{$\downarrow$} \\
\midrule
\multirow{3}*{SwinT}
& Baseline &   81.2&   20.7&   41.9&   62.0 \\
& VQ-VAE   &   81.2&   21.1&   42.3&   60.3 \\
& AdvWavAug&\bf81.4&\bf22.2&\bf47.0&\bf53.2 \\
\midrule
\multirow{3}*{SwinS}
& Baseline &   83.2&   32.1&   45.3&   54.9 \\
& VQ-VAE   &   82.9&   31.6&\bf52.8&   46.2 \\
& AdvWavAug&\bf83.4&\bf35.0&   49.4&\bf45.8 \\
\midrule
\multirow{3}*{SwinB}
& Baseline &   83.5&   34.4&   47.2&   54.5 \\
& VQ-VAE   &   83.3&   33.4&   49.4&   50.2 \\
& AdvWavAug&\bf83.9&\bf38.3&\bf51.1&\bf44.9 \\
\bottomrule
\end{tabular*}
\end{minipage}
\end{center}
\end{table*}

\begin{table*}[h]\small
\begin{center}
\begin{minipage}{0.85\textwidth}
\caption{Combination with another augmentation technique AugMix on different datasets, including ImageNet, ImageNet-A, and ImageNet-R. We combine our $\rm AdvWavAug$ with AugMix, and compare its performance with the original AugMix on ResNet-50.}
\label{tab:5}
\begin{tabular*}{\textwidth}{@{\extracolsep{\fill}}llccccc@{\extracolsep{\fill}}}
\toprule%
Model & Method & ImageNet & ImageNet-A & ImageNet-R & ImageNet-C \\
\cmidrule{3-6}
&& Top-1 Acc. \textcolor{red}{$\uparrow$} & Top-1 Acc. \textcolor{red}{$\uparrow$} & Top-1 Acc. \textcolor{red}{$\uparrow$} & mCE \textcolor{red}{$\downarrow$} \\
\midrule
\multirow{4}*{Res50}
& Baseline &76.3&2.5&35.9&77.4 \\
\cmidrule{2-6}
& AugMix &77.5&3.8&41.1&65.3 \\
\cmidrule{2-6}
& AdvWavAug &   77.5&   4.3&   39.5&   68.0 \\
& +AugMix   &\bf77.7&\bf5.6&\bf41.3&\bf63.3 \\
\bottomrule
\end{tabular*}
\end{minipage}
\end{center}
\end{table*}

\begin{table*}[h]\small
\begin{center}
\begin{minipage}{0.85\textwidth}
\caption{Comparison of generalization with MAE and MAE+AdvWavAug on different datasets, including ImageNet, ImageNet-A and ImageNet-R.}
\label{tab:6}
\begin{tabular*}{\textwidth}{@{\extracolsep{\fill}}llccccc@{\extracolsep{\fill}}}
\toprule%
Model & Method & ImageNet & ImageNet-A & ImageNet-R & ImageNet-C \\
\cmidrule{3-6}
&& Top-1 Acc. \textcolor{red}{$\uparrow$} & Top-1 Acc. \textcolor{red}{$\uparrow$} & Top-1 Acc. \textcolor{red}{$\uparrow$} & mCE \textcolor{red}{$\downarrow$} \\
\midrule
\multirow{2}*{ViTB}
& Baseline &   83.6&   35.9&   48.3&   51.7 \\
& AdvWavAug&\bf83.9&\bf37.9&\bf50.8&\bf47.0 \\
\midrule
\multirow{2}*{ViTL}
& Baseline &   85.9&   57.1&   59.9&   41.8 \\
& AdvWavAug&\bf86.2&\bf58.9&\bf61.3&\bf38.1 \\
\midrule
\multirow{2}*{ViTH}
& Baseline &   86.9& 68.2&   64.4&   33.8 \\
& AdvWavAug&\bf87.1& \bf 68.3&\bf65.4&\bf32.6 \\
\bottomrule
\end{tabular*}
\end{minipage}
\end{center}
\end{table*}

\begin{figure}[!t]
  \centering
  \includegraphics[width=0.85\linewidth]{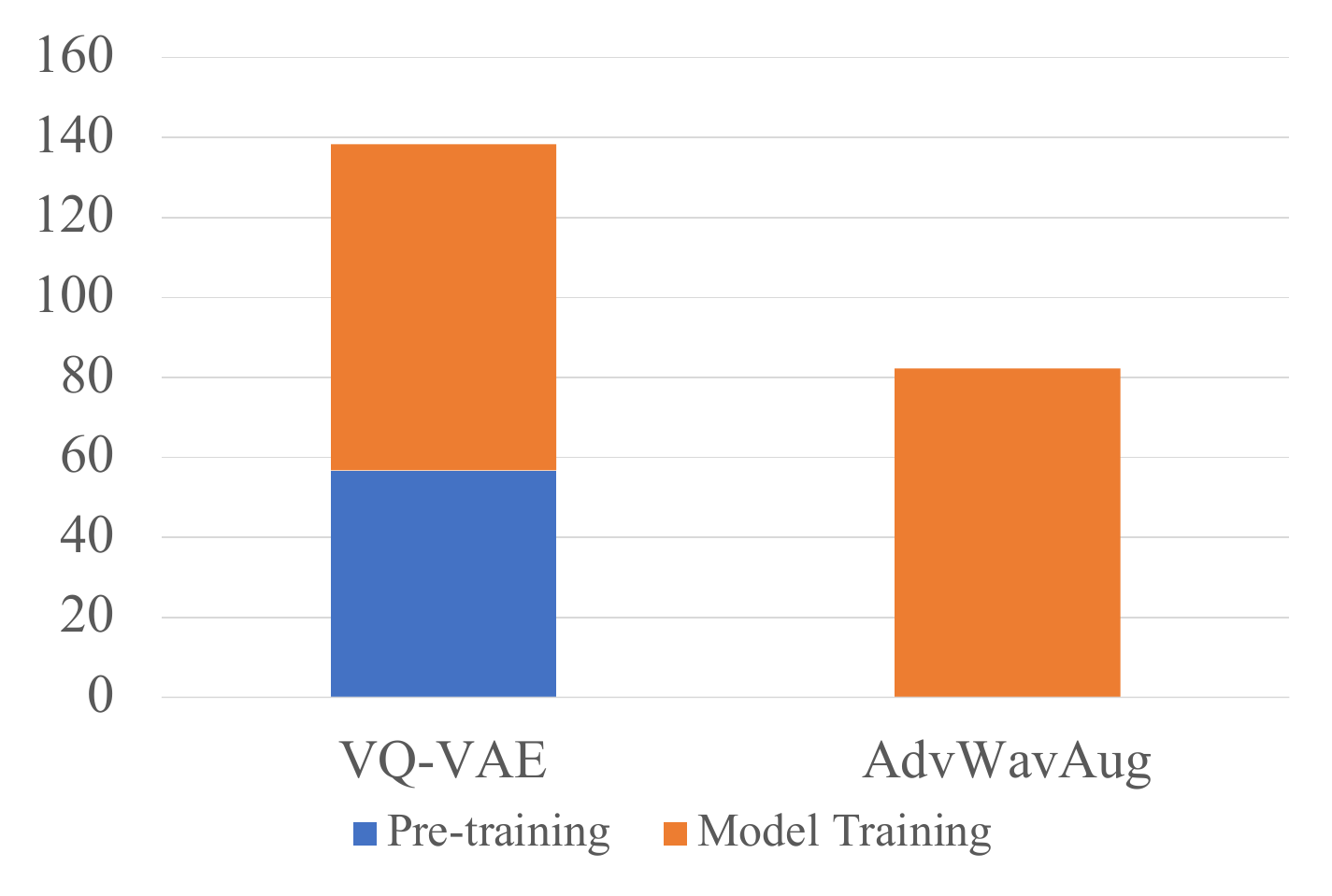} 
  \caption{Comparison of computation cost with AdvWavAug and VQ-VAE augmentation based on Swin Transformer Small.} 
  \label{fig:time} 
\end{figure}

\subsubsection{Comparison with VQ-VAE Augmentation}
We also train a VQ-VAE model to approximate the data manifold of ImageNet. With the assistance of the VQ-VAE model, we can generate on-manifold adversarial examples as shown in~\citep{stutz2019disentangling}. To compare the performance of our method with VQ-VAE augmentation, we conducted experiments with aligned settings, where the training epochs of Swin Transformers were set to 105. The results are summarized in Tab.~\ref{tab:3}.

Then, to achieve SOTA results on Swin Transformers, we follow the standard settings as defined in~\citep{liu2021swin}, and Swin Transformer Base model is added. The results are summarized in Tab.~\ref{tab:4}, in which the baseline models are taken directly from the official implementation in~\citep{liu2021swin}. Taking Swin Transformer Base with standard settings as an example, our method improves OOD generalization on ImageNet by 0.6\%, ImageNet-A by 4.9\%, ImageNet-R by 1.7\%, and ImageNet-C by 5.3\%. It has been shown that the implementation of the data manifold with AdvWavAug achieves competitive OOD generalization compared to the VQ-VAE implementation based on aligned training settings, while our method outperforms the VQ-VAE implementation based on standard Swin Transformers. From the aligned settings, it seems that the convergence time of the VQ-VAE augmentation is faster than that of our AdvWavAug. However, the final converged results of the VQ-VAE implementation are inferior to those of the AdvWavAug, highlighting the challenge of accurately approximating the data manifold with VQ-VAE. Generally, approximating the data manifold, especially on large-scale datasets, places great demands on the representational capacity of VAEs. In contrast, our method overcomes it in a more stable and simple way by leveraging frequency analysis. We have also compared the total training time of both augmentations in Fig.~\ref{fig:time}. We conduct the experiments based on Swin Transformer Small with 32 NVIDIA GeForce RTX 3090 GPUs, each with 24 GB memory. The computation cost of AdvWavAug is nearly $59.4\%$ lower than that of the VQ-VAE augmentation because our method requires no pre-training process. Although the number of convolution operations of our method is smaller than the VQ-VAE implementation, our method shows no obvious advantage during the model training process. That is because the wavelet transforms have not been fully optimized in the PyTorch implementation, while the convolutions in VQ-VAE are optimized based on the PyTorch framework. We believe the training time will be reduced once the operations in AdvWavAug are fully optimized. Overall, these findings demonstrate that AdvWavAug can approximate the data manifold more efficiently by leveraging the sparsity in the frequency domain. Considering both the augmentation effect and efficiency, AdvWavAug serves as a promising alternative to VQ-VAE augmentation.

\subsubsection{Integration with Other Data Augmentations}
\label{sec:5.2.4}
We have also combined our module with other data augmentation techniques, such as AugMix~\citep{hendrycks2019augmix}. The effectiveness of the combined version on transformers has already been verified by the results in Tab.~\ref{tab:4} because AugMix has been integrated into the training process of Swin Transformers in the baseline setting. Therefore, we only provide the results based on ResNet 50 here. The results are shown in Tab.~\ref{tab:5}, in which the AugMix results are directly taken from~\citep{hendrycks2019augmix}. Our method exhibits a similar augmentation effect to common data augmentation techniques, such as AugMix. Our AugMix+$\rm AdvWavAug$ has better performance gains with 0.2\% on ImageNet, 1.8\% on ImageNet-A 0.2\% on ImageNet-R, and 2.0\% on ImageNet-C, compared to using AugMix alone. It indicates that the generalization improvement provided by our method does not conflict with common data augmentations. The reason behind this is that our method enhances generalization by identifying the worst-case scenarios (adversarial examples) within the manifold, while common data augmentations improve generalization by increasing the diversity of the training data.

\begin{table*}[h]\small
\begin{center}
\begin{minipage}{0.85\textwidth}
\caption{Comparison of generalization with AdvWavAug and AdvWavAug+PGD on different datasets, including ImageNet, ImageNet-A, ImageNet-R and ImageNet-C.}
\label{tab:7}
\begin{tabular*}{\textwidth}{@{\extracolsep{\fill}}llccccc@{\extracolsep{\fill}}}
\toprule%
Model & Method & ImageNet & ImageNet-A & ImageNet-R & ImageNet-C \\
\cmidrule{3-6}
&& Top-1 Acc. \textcolor{red}{$\uparrow$} & Top-1 Acc. \textcolor{red}{$\uparrow$} & Top-1 Acc. \textcolor{red}{$\uparrow$} & mCE \textcolor{red}{$\downarrow$} \\
\midrule
\multirow{2}*{Res50}
& AdvWavAug &\bf77.5&   4.1&   39.3&   69.0 \\
& +PGD      &   77.4&\bf4.3&\bf39.5&\bf68.0 \\
\midrule
\multirow{2}*{Res101}
& AdvWavAug &   79.2&   8.6&   42.2&   63.7 \\
& +PGD      &\bf79.5&\bf8.9&\bf43.4&\bf63.0 \\
\midrule
\multirow{2}*{Res152}
& AdvWavAug &\bf80.5&\bf13.0&\bf45.9&   63.2 \\
& +PGD      &   79.1&    9.6&   42.1&\bf62.7 \\
\midrule
\multirow{2}*{SwinT}
& AdvWavAug &\bf79.7&   15.8&\bf44.9&\bf56.1 \\
& +PGD      &\bf79.7&\bf16.8&   43.9&   56.6 \\
\midrule
\multirow{2}*{SwinS}
& AdvWavAug &\bf81.6&\bf25.8&\bf47.7&\bf48.9 \\
& +PGD      &   80.5&   20.9&   44.9&   53.4 \\
\bottomrule
\end{tabular*}
\end{minipage}
\end{center}
\end{table*}

\begin{table*}[h]\small
\begin{center}
\begin{minipage}{0.85\textwidth}
\caption{Comparison of normal adversarial training scheme on ResNet50 with adversarial examples generated by PGD and our AdvWavAug. We have chosen four datasets, including Top-1 Acc. (\%) on ImageNet, Top-1 Acc. (\%) on ImageNet-A, Top-1 Acc. (\%) on ImageNet-R and mCE (\%) on ImageNet-C.}
\label{tab:8}
\begin{tabular*}{\textwidth}{@{\extracolsep{\fill}}llcccc@{\extracolsep{\fill}}}
\toprule%
Model & Method & ImageNet & ImageNet-A & ImageNet-R & ImageNet-C \\\cmidrule{3-6}%
&& Top-1 Acc. \textcolor{red}{$\uparrow$} & Top-1 Acc. \textcolor{red}{$\uparrow$} & Top-1 Acc. \textcolor{red}{$\uparrow$} & mCE \textcolor{red}{$\downarrow$} \\\midrule
\multirow{3}*{Res50}
& Baseline &\bf76.3&2.5&35.9&77.4 \\
\cmidrule{2-6}
& PGD-AT &75.9&2.6&\bf74.7&38.6 \\
\cmidrule{2-6}
& AdvWavAug-AT &76.1&\bf2.9&74.0&\bf37.0 \\
\bottomrule
\end{tabular*}
\end{minipage}
\end{center}
\end{table*}

\subsubsection{Integration into Masked Autoencoders}
It has been shown that masked autoencoders can learn the structural aspects of objects in an image. On-manifold adversarial examples are more closely related to the semantic meanings of input images, providing a deeper understanding of image structure. Therefore, we integrate our method into the fine-tuning process of MAE method. The baseline models are taken directly from~\citep{he2021masked}. The results are shown in Tab.~\ref{tab:6}, in which our method has improved ViTL on ImageNet by 0.3\%, ImageNet-A by 1.8\%, and ImageNet-R by 1.4\% ImageNet-C by 3.7\%. It has been shown that our method can further improve the OOD generalization of Vision Transformers pre-trained with MAE. This improvement is attributed to AdvWavAug's ability to explore the boundary of a classifier within the data manifold.

\begin{figure}[!t]
  \centering
  \includegraphics[width=0.98\columnwidth]{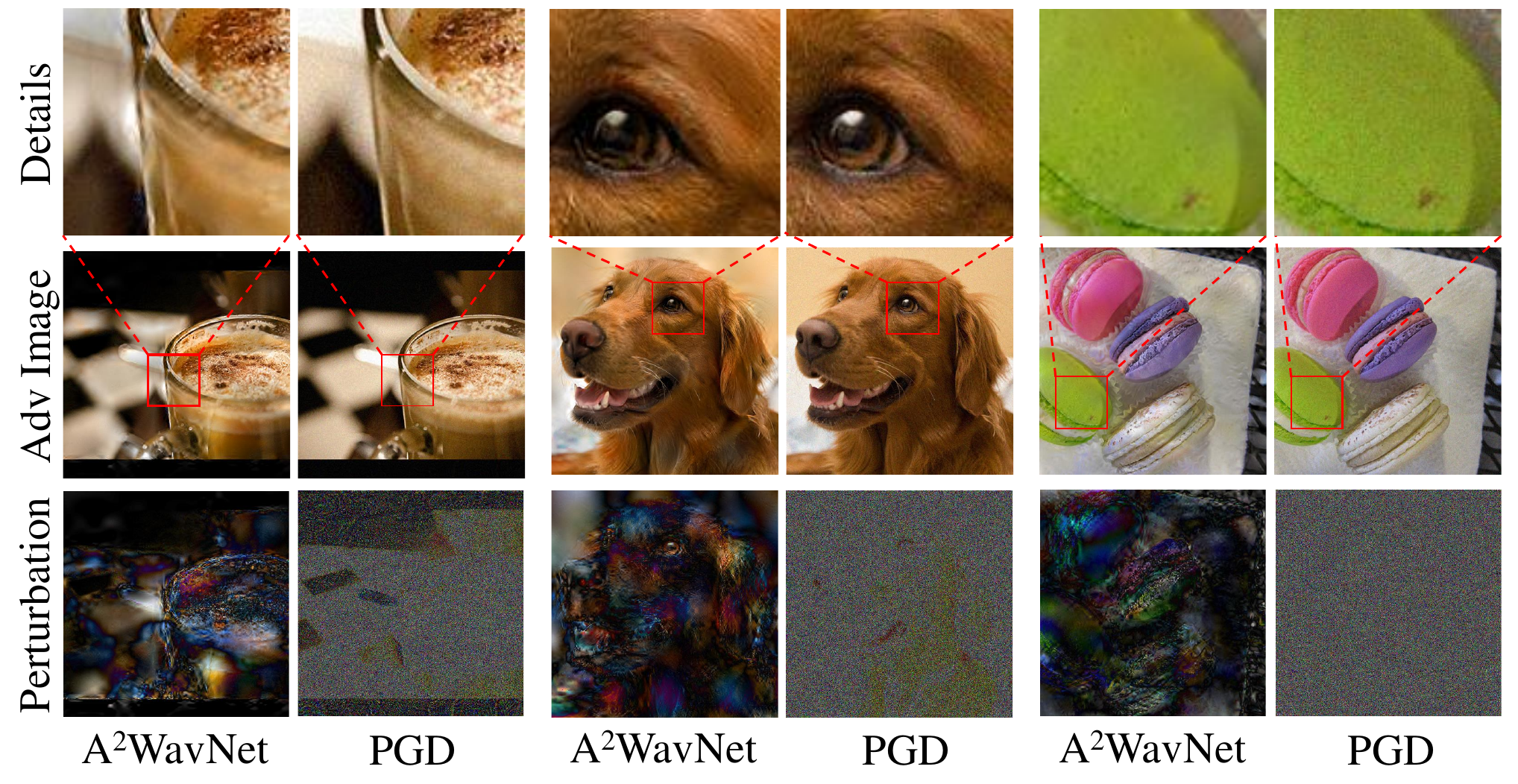} 
  \caption{Visualization of adversarial examples generated by AdvWavAug and PGD. It can be seen that adversarial examples of our method are more natural, while PGD adversarial examples have many noise-like patterns or off-manifold perturbations.} 
  \label{fig:visual} 
\end{figure}

\subsubsection{Visualization of Adversarially Augmented Data}
In Fig.~\ref{fig:visual}, we present examples of adversarial images generated using AdvWavAug, which exhibit more natural details and perturbations that are more semantically meaningful. In Appendix~C, we compare AdvWavAug with PGD based on the image quality of the adversarial examples, as measured by metrics such as FID~\citep{heusel2017gans} and LPIPS~\citep{zhang2018unreasonable}. The results demonstrate that AdvWavAug can generate adversarial examples with desirable attack performance and a closer distance to the data manifold than PGD.

\subsection{Ablation Study}
\label{sec:5.3}
\subsubsection{Integration with Off-manifold Adversarial Augmentation}
Although on-manifold adversarial augmentation generally outperforms off-manifold augmentation, it remains unknown how the combination of both augmentations would perform. To investigate this, we randomly select adversarial examples from AdvWavAug and PGD attacker as augmentation samples, and the detailed results are shown in Tab.~\ref{tab:7}.

Interestingly, the performance of AdvWavAug +PGD decreases in some of the tested models. It means introducing off-manifold adversarial examples cannot stably improve model generalization and may even conflict with on-manifold adversarial examples. This phenomenon aligns with our analysis that off-manifold adversarial examples cannot find the real pitfalls of a model. Furthermore, altering the model structure can undermine the generalization improvement achieved by off-manifold adversarial examples. Our method can find the real pitfalls of a model, and consistently improve generalization across different model architectures.

\subsubsection{Normal Adversarial Training}
\label{sec:5.3.2}
In this experiment, we perform adversarial training using the same objective function as described in Eq.~\eqref{eq:3} and Eq.~\eqref{eq:16}, where both clean and adversarial data are fed to the network within a mini-batch. Tab.~\ref{tab:8} shows the experimental results. Expect for ImageNet-R, our AdvWavAug outperforms PGD on other datasets under normal adversarial training. These results demonstrate that our method can effectively enhance the overall generalization of the model in the context of normal adversarial training. This superiority can be attributed to the fact that on-manifold adversarial examples remain within the data manifold and are thus more suitable for normal adversarial training.

\begin{table}[h]
\begin{center}
\begin{minipage}{205pt}
\caption{Comparison of AdvWavAug generalization with different wavelet settings. We have chosen ResNet-50 as the base model and compared the Top-1 Acc. (\%). From Setting 1 (S1) to Setting 6 (S6), the weights of perturbations will gradually shift from high-frequency bands to low-frequency bands.}
\label{tab:9}
\begin{tabular}{@{}lcccccc@{}}
\toprule%
Model & \multicolumn{6}{@{}c@{}}{Wavelet Setting} \\
\cmidrule{2-7}
&1&2&3&4&5&6 \\
\midrule
Res50 &76.7&76.8&\bf77.5&77.1&77.0&76.9 \\
\bottomrule
\end{tabular}
\end{minipage}
\end{center}
\end{table}

\subsubsection{Training with Different Wavelet Settings}
We also investigated the impact of modified frequency bands on the generalization performance. We gradually shift the attention from high-frequency bands to low-frequency bands by varying the wavelet setting from Setting 1 (S1) to Setting 6 (S6) in Tab.~\ref{tab:1}. Results in Tab.~\ref{tab:9} show that wavelet Setting 3 (S3), which primarily modifies higher frequency bands while making subtle modifications to the remaining bands, achieves the best performance on ResNet-50. This outcome can be attributed to the difficulty in controlling modifications on low-frequency bands, which may lead to adversarial examples that deviate from the data manifold. Therefore, the chosen Setting 3 (S3) is a relatively balanced setting.

\section{Conclusion}
\label{sec:6}
In this paper, we have proposed AdvWavAug, a new data augmentation algorithm based on on-manifold adversarial examples in the frequency domain to improve OOD generalization, followed by an AdvProp training scheme to minimize the loss function of both the clean samples and the on-manifold adversarial examples.

We have provided theoretical proof that models resilient to on-manifold adversarial examples have an upper bound on OOD generalization, which is commonly smaller compared to models resilient to off-manifold adversarial examples. This indicates that on-manifold adversarial augmentation is more effective in enhancing OOD generalization. We have conducted extensive experiments to verify the effectiveness of our method, resulting in SOTA performance on two transformer-based architectures including Swin Transformers and Vision Transformers pre-trained by MAE.

Moving forward, our future work will focus on two key aspects. First, we will build an adversarial dataset based on the adversarial examples generated by our method, which can be used to test the robustness of a classifier to more natural adversarial examples. Besides, our method can be regarded as a new paradigm to generate adversarial examples, which opens up opportunities for its integration into various research areas that involve adversarial examples. For example, in certain studies that combine adversarial augmentations with contrastive learning, such as those mentioned in~\citep{ho2020contrastive}, our method could potentially replace the original adversarial module, thereby offering a new approach to enhance performance.

\section*{Acknowledgments}
This work was supported by the NSFC Projects (Nos. 62076147, U19A2081, U19B2034, U1811461), Alibaba Group through Alibaba Innovative Research Program, a grant from Tsinghua Institute for Guo Qiang, and the High Performance Computing Center, Tsinghua University.

\noindent \begin{center} {\large  \textbf{Appendix for: Improving Model Generalization by On-manifold Adversarial Augmentation in the Frequency Domain}} \end{center}
\begin{appendix}
\setcounter{corollary}{0}

In this appendix, we provide additional technical details of the proposed method. In~\ref{appendix:a}, we provide a detailed proof for the upper-bound of OOD generalization based on on-manifold adversarial robustness in Sec. 3.1. Then, in~\ref{appendix:b} we offer a detailed analysis of the perturbation range in 
the frequency domain, which is defined in Sec. 4.1. In~\ref{appendix:c}, we provide a detailed analysis of the on-manifold adversarial examples generated by our method, considering both image quality and attack performance. In~\ref{appendix:d}, we provide detailed results on different corruptions in the ImageNet-C dataset.
\section{Detailed Proof for Relationship Between OOD Generalization and On-manifold Adversarial Robustness}
\label{appendix:a}
Lemma 1 demonstrates that models robust to on-manifold perturbations establish an upper bound for models robust to regular perturbations. Hence, we can draw similar conclusions as in Theorem 1 referring to~\citep{yi2021improved}. The detailed proof begins by considering the definition of a cover for a set of points.

\begin{definition}[\citep{wainwright2019high}]
    To a point set $\mathcal{Z}$, the $\epsilon$-cover in norm $\|\cdot\|_\infty$ is defined as $\mathcal{Z}_c=\{z_1, \cdots , z_N\}$, in which, for any $z \in Z$, there exists $z_i \in \mathcal{Z}_c$ such that $\|z-z_i\|_\infty \leq \epsilon$. The covering number can be defined as $N(\mathcal{Z},\epsilon, \|\cdot\|_\infty)=\inf\{N \in \mathbb{N}~\mid$ $\mathcal{Z}_c=\{z_1, \cdots , z_N\}$ is a $\epsilon$-cover  of $\mathcal{Z}$ in $\|\cdot\|_\infty$ norm\}.
\end{definition}

Then, we come to the proof of Theorem 1, which is borrowed from~\citep{yi2021improved}.

\begin{proof}
As we transfer the image data space into the latent space, we only need to cover the latent space, which has a reduced dimension than the image data space. Just the same as the conclusion in~\citep{yi2021improved}, we can obtain an inequality that $N(\mathcal{Z}, \epsilon_z, \|\cdot\|_\infty) \leq (2d)^{(2D/\epsilon_z^2+1)}=N_z$, in which $\mathcal{Z}$ is the feature in the latent space, $d$ is the number of dimensions of the latent code, $D$ is the diameter of the support in latent space. Then, we can construct an $\epsilon_z$ cover in norm $\|\cdot\|_\infty$ of $\mathcal{Z}$ with pairwise disjoint sets $\mathcal{C}=\{C_1,\cdots,C_{N_z}\}$. For each $c_i,c_j \in \mathcal{C}$, we have $\|c_i-c_j\|_{\infty} \leq \epsilon_z$. We can construct the probability distribution in the latent space $\bm{z}\sim P_z$ with $\bm{z}=g(\bm{x})$. Due to Lemma 1, we have
\begin{align}
    \gamma(\epsilon,p)\leq&\Biggl\vert\mathbb{E}_{\bm{z}\sim P_{z}}\left[\sup_{\|\bm{\delta}_z\|_\infty\leq \epsilon_z}\ell (\bm{\theta}, g^{-1}(\bm{z}+\bm{\delta}_z))\right]
    -\mathbb{E}_{\bm{x}\sim P_{0}}\left[\ell (\bm{\theta}, \bm{x})\right] \Biggr\vert \\ \nonumber
    =&\Biggl\vert \sum_{i=1}^{N_z} \mathbb{E}_{\bm{z}\sim P_z} \left[ \sup_{\|\bm{\delta}_z\|_\infty\leq \epsilon_z}\ell (\bm{\theta}, g^{-1}(\bm{z}+\bm{\delta}_z)) \mid \bm{z} \in C_i \right] 
    \cdot P_0(C_i)\\ \nonumber
    &~~~~~~- \frac{1}{N}\sum_{j=1}^{N}\ell(\bm{\theta},g^{-1}(g(\bm{x}_i)))\Biggr\vert \\ \nonumber
    \leq& \Biggl\vert \sum_{i=1}^{N_z} \mathbb{E}_{\bm{z}\sim P_z} \left[ \sup_{\|\bm{\delta}_z\|_\infty\leq \epsilon_z}\ell (\bm{\theta}, g^{-1}(\bm{z}+\bm{\delta}_z)) \mid \bm{z} \in C_i \right]\cdot \frac{N_{C_i}}{N}  \\ \nonumber
    &~~~~~~- \frac{1}{N}\sum_{j=1}^{N}\ell(\bm{\theta},g^{-1}(g(\bm{x}_i)))\Biggr\vert +M\sum_{k=1}^{N_z}\left\vert P_0(C_k)-\frac{N_{C_k}}{N} \right\vert \\ \nonumber
    =&\Biggl\vert \frac{1}{N} \sum_{i=1}^{N_z} \sum_{\bm{z}_j\in C_i}\Biggl[\mathbb{E}_{\bm{z}\sim P_z} \left[ \sup_{\|\bm{\delta}_z\|_\infty\leq \epsilon_z}\ell (\bm{\theta}, g^{-1}(\bm{z}+\bm{\delta}_z)) \mid \bm{z} \in C_i \right] \\ \nonumber
    &~~~~~~-\ell(\bm{\theta},g^{-1}(\bm{z}_j))\Biggr]\Biggr\vert+M\sum_{k=1}^{N_z}\left\vert P_0(C_k)-\frac{N_{C_k}}{N} \right\vert \\ \nonumber
    \leq& \Biggl\vert\frac{1}{N} \sum_{i=1}^{N_z} \sum_{\bm{z}_j \in C_i} \sup_{\bm{z} \in B(C_j,\epsilon_z)} \vert \ell (\bm{\theta}, g^{-1}(\bm{z}))-\ell(\bm{\theta}, g^{-1}(\bm{z}_i)) \vert\Biggr\vert \\ \nonumber
    &~~~~~~+M\sum_{k=1}^{N_z}\left\vert P_0(C_k)-\frac{N_{C_k}}{N} \right\vert \\ \nonumber
    \leq& \frac{1}{N} \sum_{i=1}^N \sup_{\|\bm{\delta}_z\|_\infty \leq 2\epsilon_z} \vert \ell (\bm{\theta}, g^{-1}(\bm{z}_i+\bm{\delta}_z))-\ell(\bm{\theta}, g^{-1}(\bm{z}_j)) \vert \\ \nonumber
    &~~~~~~+M\sum_{k=1}^{N_z}\left\vert P_0(C_k)-\frac{N_{C_k}}{N} \right\vert \\ \nonumber
    \leq& \tau+M\sum_{k=1}^{N_z}\left\vert P_0(C_k)-\frac{N_{C_k}}{N} \right\vert,
\end{align}
in which, $B(C_j,\epsilon_z)=\{\bm{p}_m \mid \exists c_n \in C_j, \|p_m-c_n\|_{\infty}\leq\epsilon_z\}$, $N_{C_k}$ represents the number of data points in $C_k$. For the same proposition in~\citep{wellner2013weak}, we get the conclusion in Theorem 1.
\end{proof}

\section{Detailed Analysis for Adversarial Augmentation of AdvWavAug}
\label{appendix:b}
In Sec. 4.1, we formulate the multiplicative perturbation in the frequency domain as Eq.~14. The range of perturbation can be derived from the constraints on regular perturbations. By comparing the latent code of a regular adversarial example $\bm{z}^{adv}_r$ with our proposed adversarial example $\bm{z}^{adv}_f$ in the frequency domain

\begin{align}
    \bm{z}^{adv}_r&=\mathcal{W}(\bm{x})+\mathcal{W}(\bm{\delta}), \label{eq:22}\\
    \bm{z}^{adv}_f&=\mathcal{W}(\bm{x}) \odot (\textbf{1}+\Tilde{\bm{\delta}_{f}}), \label{eq:23}
\end{align}
the magnitude of perturbation in Eq.~\eqref{eq:22} should be lower than that in Eq.~\eqref{eq:23}. By setting all wavelet coefficients below the threshold $T$ to zero, we can get
\begin{equation}
\label{eq:24}
    \| \Tilde{\bm{\delta}_{f}} \|_{p} \leq \frac{\| \mathcal{W}(\bm{\delta}) \|_{p}}{\sqrt[p]{n} T},
\end{equation}
in which, $n$ is the total number of non-sparse coefficients.

Since we perform an orthogonal wavelet transform, according to Parseval's theorem, the $L_2$-norm of the coefficients in the frequency domain and the spatial domain are equal. Therefore, there exist two positive constants $P$ and $Q$ such that
\begin{align}
\label{eq:25}
    \| \mathcal{W}(\bm{\delta}) \|_{p} &\leq P \| \mathcal{W}(\bm{\delta}) \|_{2} \\ \nonumber
    &= P \| \bm{\delta}_f \|_{2} \\ \nonumber
    &\leq PQ\| \bm{\delta}_f \|_{p} \\ \nonumber
    &\leq PQ\epsilon_f.
\end{align}

By combining Eq.~\eqref{eq:24} and Eq.~\eqref{eq:25}, we can obtain the constraint of $\Tilde{\bm{\delta}_{f}}$ as
\begin{equation}
\label{eq:26}
    \| \Tilde{\bm{\delta}_{f}} \|_{p} \leq \frac{PQ\epsilon_f}{\sqrt[p]{n}T} = \Tilde{\epsilon}_f,
\end{equation}
in which, $\Tilde{\epsilon}_f$ is the new upper bound of $\| \Tilde{\bm{\delta}_{f}} \|_{p}$.

\section{Image Quality of On-manifold Adversarial Examples}
\label{appendix:c}
In this section, we compare the effectiveness of the on-manifold adversarial examples generated by our AdvWavAug with regular adversarial examples, \eg PGD~(\cite{madry2017towards}) and DIM~(\cite{xie2019improving}), in terms of both image quality and attack performance.
\subsection{White-box Attack Settings}
During the white-box attack, we mainly compare the performance of the PGD and AdvWavAug attack module. We set up the PGD attack with $\epsilon=1/255$, iteration number $n=1$ and step size $\alpha=1/255$; AdvWavAug module with iteration number $n=1$, step size as Setting 3 (S3) in Tab.~1. To maintain the continuity of attacks and prevent the clip operation from affecting it, we employ a technique that sets the constraints in different scales by adjusting the step size at each iteration, rather than the final constraint. This approach allows us to effectively limit the attack perturbation $\Tilde{\epsilon}$ within an adaptive constraint.

\begin{figure*}[tp]
  \centering
  \includegraphics[width=0.96\linewidth]{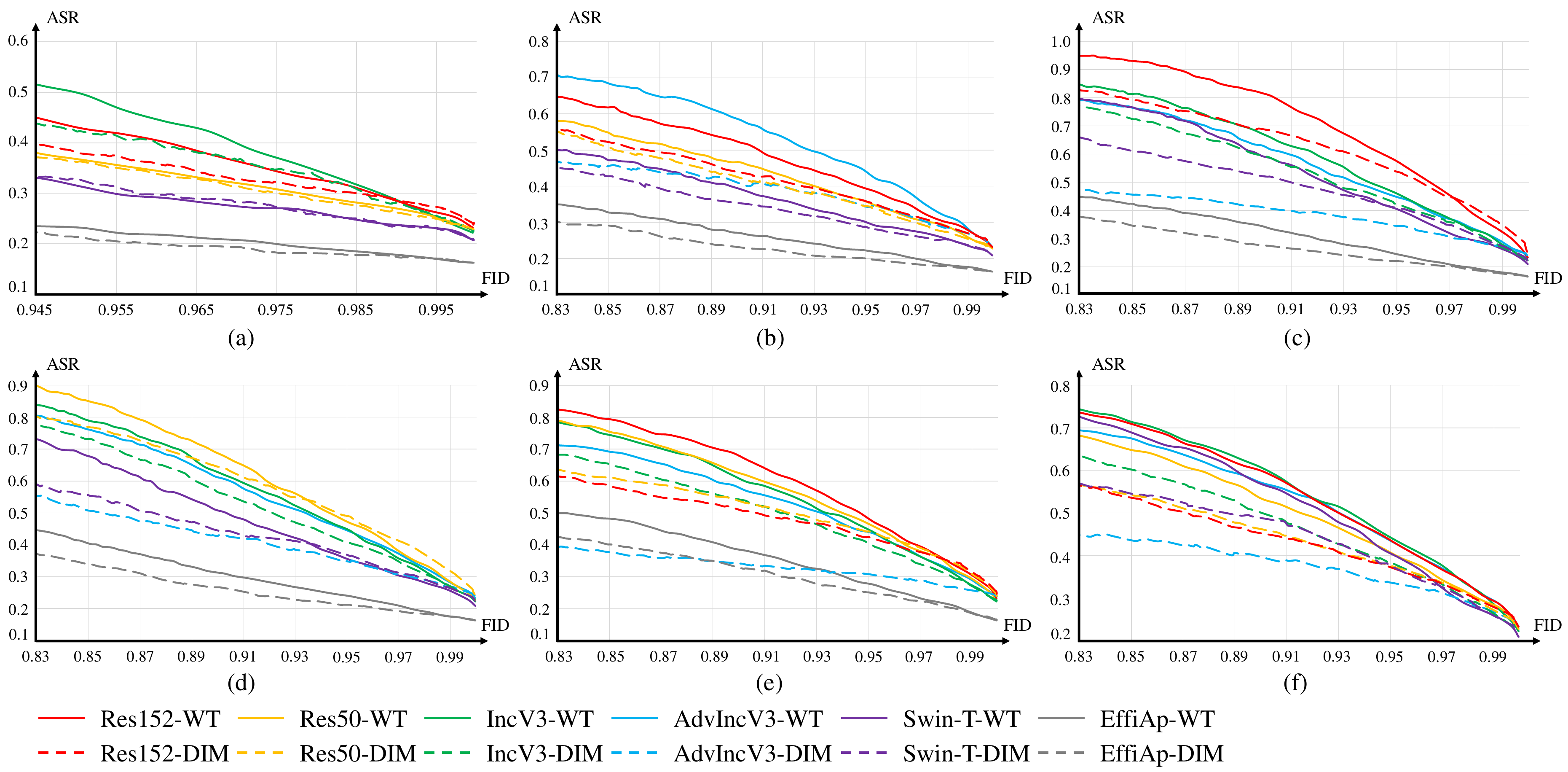} 
  \caption{The ASR-FID curves of various target models. The curves in (a)-(f) exhibit the transferability to other models with respectively AdvIncV3, IncV3, Res50, Res152, Swin-T and EffiAp as the target models. Higher positions on the curves indicate better attack performance while maintaining the same image quality. It is evident that the adversarial examples generated by AdvWavAug (solid lines) outperform those generated by DIM (dashed lines) when targeting the same model.} 
  \label{fig:app1} 
\end{figure*}

\begin{figure*}[tp]
  \centering
  \includegraphics[width=0.96\linewidth]{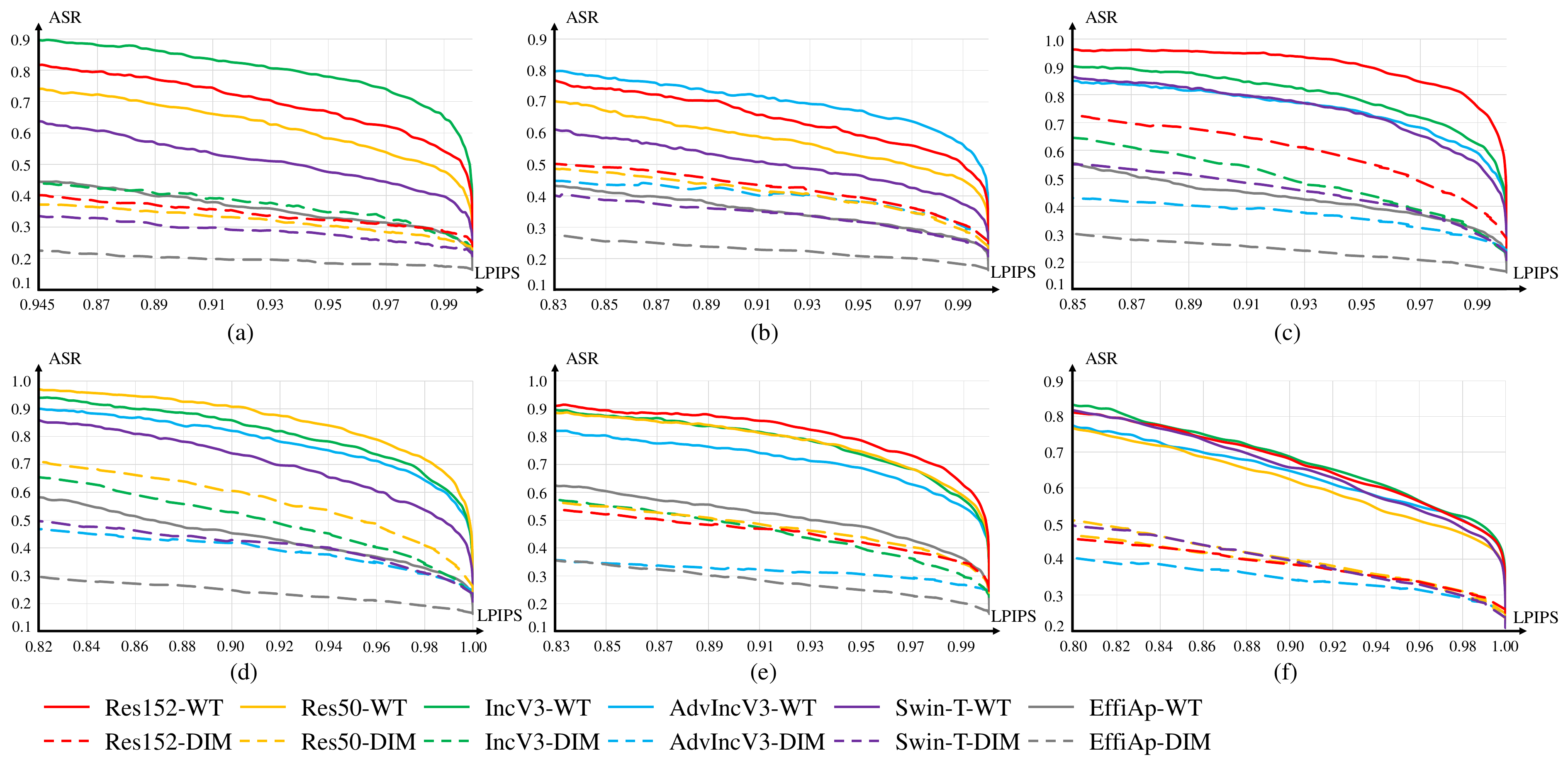} 
  \caption{The ASR-LPIPS curves of various target models. The curves in (a)-(f) exhibit the transferability to other models with respectively AdvIncV3, IncV3, Res50, Res152, Swin-T and EffiAp as the target models. Higher positions on the curves indicate better attack performance while maintaining the same image quality. It is evident that the adversarial examples generated by AdvWavAug (solid lines) outperform those generated by DIM (dashed lines) when targeting the same model.} 
  \label{fig:app2} 
\end{figure*}

\subsection{Black-box Attack Settings}
To demonstrate that our method can be plugged into any attack method and make the adversarial examples closer to the manifold, we also conduct black-box attacks. In these attacks, we combine our AdvWavAug method with DIM as the target attack method, forming AdvWavAug-DIM. Since perturbations in the frequency domain cannot be directly compared to perturbations in the spatial domain, we compare the attack performance with similar image quality, which is further described in~\ref{sec:c.5}. We set up DIM with a 0.7 probability to transform the input images, including random scaling, cropping and padding. The maximum perturbation range for DIM is set as $\epsilon=16/255$. We aim to investigate the relationship between the attack performance, represented by the attack success rate (ASR), and image quality metrics such as Fréchet Inception Distance (FID)~(\cite{heusel2017gans}) and Learned Perceptual Image Patch Similarity (LPIPS)~(\cite{zhang2018unreasonable}). To observe the attack process comprehensively, where the ASR gradually increases while the image quality gradually decreases, we plot a curve that depicts the relationship. Each point on the curve represents an intermediate adversarial example, with the x-axis denoting the image quality score and the y-axis denoting the ASR. To generate adversarial examples with different perturbation ranges, we gradually adjust our perturbation settings. However, abruptly changing the settings can result in a discontinuous attack process. To overcome this, we use a small step size and a large iteration number, allowing us to simulate different perturbation ranges indirectly. As the iterations progress, the perturbation range gradually increases, while maintaining a smooth attack process. Therefore, we set the number of iterations to $n=100$ and the step size as $16/(255*100)$. The maximum perturbation range $\epsilon=16/255$ is reached at the end of the iterations. Additionally, we remove the clipping operations to ensure that the perturbations remain within the dynamic perturbation ranges defined by our settings. Since black-box performance is independent of the model training process, we select the parameters that yield the best performance. The AdvWavAug-DIM module is configured with the same settings as DIM, step size Setting 3 (S3) from Tab.~1, and adaptive constraints.

\begin{table}[h]\small
\begin{center}
\caption{The white-box attack performance, including ASR (\%), FID (\%), LPIPS (\%) and SCORE (\%), of PGD vs AdvWavAug for different models.}
\label{tab:white_box}%
\begin{tabular}{@{}llcccc@{}}
\toprule
Model & Attack & ASR & FID & LPIPS & SCORE \\
\midrule
\multirow{2}*{Res50}
& PGD &62.7&\bf96.1&97.8&58.9 \\
& AdvWavAug &\bf70.7&95.9&\bf99.3&\bf67.3 \\
\midrule
\multirow{2}*{Res152}
& PGD &82.7&95.9&97.6&77.4 \\
& AdvWavAug &\bf83.5&\bf96.1&\bf99.3&\bf79.6 \\
\midrule
\multirow{2}*{IncV3}
& PGD &71.8&72.8&97.2&50.8 \\
& AdvWavAug &\bf72.9&\bf91.9&\bf99.3&\bf66.5 \\
\midrule
\multirow{2}*{AdvIncV3}
& PGD &40.0&97.4&98.9&38.5 \\
& AdvWavAug &\bf40.8&\bf97.6&\bf99.6&\bf39.6 \\
\midrule
\multirow{2}*{EffiAP}
& PGD &37.5&79.9&96.2&28.8 \\
& AdvWavAug &\bf37.7&\bf96.6&\bf98.4&\bf35.8 \\
\midrule
\multirow{2}*{SwinT}
& PGD &55.2&\bf98.5&99.0&53.8 \\
& AdvWavAug &\bf60.5&96.9&\bf99.3&\bf58.2 \\
\bottomrule
\end{tabular}
\end{center}
\end{table}

\subsection{Evaluation Metrics for Attacks}
\label{sec:c.3}
  We adopt various image quality metrics, including FID and LPIPS, to measure the distance to the manifold. In this section, we provide the calculation formulation for the attack success rate (ASR) and the normalized image quality metrics, including FID and LPIPS. The normalization process follows the guidelines of the CVPR 2021 challenge: Unrestricted Adversarial Attacks on ImageNet. Although we modify some parameters to enhance the differences, the scores remain within the range of $[0, 1]$. Suppose that the original $N$ clean images $X=\{x_1,x_2,\cdots,x_n\}$ are perturbed as adversarial examples $\hat{X}=\{\hat{x}_1,\hat{x}_2,\cdots,\hat{x}_n\}$. The ASR measures the attack ability as
  \begin{equation}
      {\rm ASR}=\frac{\| \{ \hat{x} \mid \mathcal{F}(\hat{x}) \ne y\} \|}{N},
  \end{equation}
in which, $\mathcal{F}(\hat{x})$ is the output of a classifier.

One of the image quality metrics is FID, which represents the naturalness of a generated image. The normalized version is expressed as
\begin{equation}
    {\rm FID}=\sqrt{1-\frac{\min(fid(X,\hat{X}),ubf)}{ubf}},
\end{equation}
in which, $fid$ is calculated based on the original setting in~\citep{heusel2017gans}, $ubf$ is the upper bound of the original $fid$ score. We set $ubf=10$ for white-box attacks and $ubf=200$ for black-box attacks.

Another image quality metric is LPIPS, which represents the perceptual distance between two images. The normalized LPIPS score is expressed as
\begin{equation}
    {\rm LPIPS}=\sqrt{1-2*(\min(\max(lpips,ubl),lbl))},
\end{equation}
in which, $lpips$ is the original score referred to~\citep{zhang2018unreasonable}, $ubl$ is the upper bound and $lbl$ is the lower-bound of the original $lpips$ score. We set $lbl=0.0$, $ubl=0.5$ in both white-box attacks and $lbl=0.1$, $ubl=0.6$ in black-box attacks.

The total score, denoted as SCORE, takes into account both the attack performance and the image quality scores, which is expressed as
\begin{equation}
    \rm SCORE=100 *ASR*FID*LPIPS.
\end{equation}

\subsection{Experimental Results on White-box Attacks}
The results in Tab.~\ref{tab:white_box} show that our AdvWavAug method outperforms PGD in generating adversarial examples, as indicated by higher values of ASR, FID, LPIPS, and SCORE. These results highlight the effectiveness of AdvWavAug in generating adversarial examples with improved attack performance and closer proximity to the data manifold, compared to PGD.

\renewcommand\arraystretch{1.4}
\begin{sidewaystable*}\small
\begin{center}
\begin{minipage}{\textheight}
\caption{Comparison of generalization with different training methods on different corruptions in ImageNet-C. We compare the baseline model, Gaussian Augmentation, AdvProp and AdvWavAug.}
\label{tab:2.2}
\begin{tabular*}{\textheight}{@{\extracolsep{\fill}}lllccccccccccccccc@{\extracolsep{\fill}}}
\toprule%
& Method & mCE \textcolor{red}{$\downarrow$} & \multicolumn{3}{@{}c@{}}{Noise} & \multicolumn{4}{@{}c@{}}{Blur} & \multicolumn{4}{@{}c@{}}{Weather} & \multicolumn{4}{@{}c@{}}{Digital} \\\cmidrule{4-6} \cmidrule{7-10} \cmidrule{11-14} \cmidrule{15-18}
& & & Gau & shot & Imp & Def & Glass & Mot & Zoom & Snow & Frost & Fog & Bright & Cont & Elas & Pixel & JPEG \\
\midrule
\multirow{4}*{\rotatebox{90}{Res50}}
& Baseline &77.4&78.3&79.5&82.0&73.7&89.7&78.1&81.0&82.6&77.5&68.3&57.8&72.2&85.5&77.4&77.7 \\
& Gaussian &75.7&75.5&76.7&78.3&73.8&88.6&76.3&80.3&79.2&75.7&69.0&57.2&71.6&84.3&74.7&74.8 \\
& AdvProp &69.9&74.0&74.2&76.2&69.4&\bf79.6&74.0&\bf73.8&\bf74.0&\bf67.6&\bf65.0&\bf52.2&68.3&\bf77.1&61.2&\bf61.4 \\
& AdvWavAug &\bf69.0&\bf66.7&\bf68.4&\bf68.8&\bf68.4&81.0&\bf73.5&74.9&78.2&71.1&66.6&53.5&\bf67.9&77.2&\bf56.2&63.3 \\
\midrule
\multirow{4}*{\rotatebox{90}{Res101}}
& Baseline &70.8&71.8&74.0&73.4&68.0&82.8&73.9&77.3&74.8&71.5&63.8&53.0&66.4&79.1&63.4&68.0 \\
& Gaussian &69.4&68.4&69.8&69.7&67.7&82.6&73.7&75.2&74.4&69.8&61.7&52.0&66.4&78.2&62.7&68.0 \\
& AdvProp &65.5&65.4&67.7&67.7&64.1&\bf75.5&\bf67.0&\bf69.7&\bf71.2&\bf65.6&64.9&\bf49.3&66.7&\bf70.7&58.7&\bf58.5 \\
& AdvWavAug &\bf63.7&\bf60.2&\bf61.5&\bf61.8&\bf62.9&75.7&68.2&70.2&72.4&65.9&\bf62.3&49.7&\bf63.2&70.8&\bf50.4&60.0 \\
\midrule
\multirow{4}*{\rotatebox{90}{Res152}}
& Baseline &69.1&69.0&70.9&71.2&65.5&83.3&71.1&74.0&73.6&70.0&61.8&51.6&65.2&77.5&63.1&68.9 \\
& Gaussian &67.3&66.8&68.5&67.8&66.3&81.0&67.5&71.8&72.2&68.9&61.9&49.9&63.8&76.2&60.8&66.2 \\
& AdvProp &\bf62.3&62.2&63.5&63.9&\bf61.9&\bf71.7&\bf64.4&\bf67.7&68.9&\bf62.0&60.9&48.0&64.1&\bf68.7&\bf52.1&\bf55.4 \\
& AdvWavAug &63.2&\bf62.0&\bf63.4&\bf63.6&62.4&74.9&65.8&69.9&\bf68.1&62.8&\bf56.3&\bf47.5&\bf61.9&70.7&60.4&57.8 \\
\midrule
\multirow{4}*{\rotatebox{90}{SwinT}}
& Baseline &67.0&57.1&58.3&57.7&70.9&82.4&69.4&78.1&62.0&58.8&60.5&53.9&50.0&80.1&79.2&86.5 \\
& Gaussian &63.2&52.8&53.8&54.7&69.2&80.9&66.9&77.7&56.3&52.9&54.9&49.8&45.7&78.3&74.9&79.3 \\
& AdvProp &61.9&51.2&52.2&53.0&68.1&79.4&67.3&76.2&54.6&51.8&57.3&49.2&44.0&76.3&74.1&74.6 \\
& AdvWavAug &\bf56.1&\bf46.6&\bf47.4&\bf47.7&\bf62.9&\bf73.6&\bf61.7&\bf71.2&\bf51.6&\bf47.1&\bf53.3&\bf46.1&\bf40.0&\bf70.4&\bf61.2&\bf60.1 \\
\midrule
\multirow{4}*{\rotatebox{90}{SwinS}}
& Baseline &61.1&49.9&51.2&50.8&66.1&78.7&62.4&72.1&57.3&54.1&51.4&49.3&46.8&74.3&69.3&83.4 \\
& Gaussian &58.2&47.8&49.4&48.5&65.0&78.1&62.7&73.2&52.8&51.1&57.9&48.2&44.6&74.5&66.8&52.9 \\
& AdvProp &56.9&45.1&46.6&46.3&63.2&74.9&60.6&71.0&50.6&48.9&52.8&46.5&42.8&71.0&62.2&71.8 \\
& AdvWavAug &\bf48.9&\bf37.9&\bf39.1&\bf38.1&\bf56.6&\bf67.3&\bf54.0&\bf63.3&\bf44.4&\bf42.8&\bf45.0&\bf41.0&\bf36.3&\bf63.4&\bf49.4&\bf55.2 \\
\bottomrule
\end{tabular*}
\end{minipage}
\end{center}
\end{sidewaystable*}

\renewcommand\arraystretch{1.2}
\begin{sidewaystable}\small
\begin{center}
\begin{minipage}{\textheight}
\caption{Comparison of generalization with AdvWavAug and VQ-VAE augmentation on different corruptions in ImageNet-C.}
\label{tab:3.2}
\begin{tabular*}{\textheight}{@{\extracolsep{\fill}}lllccccccccccccccc@{\extracolsep{\fill}}}
\toprule%
& Method & mCE \textcolor{red}{$\downarrow$} & \multicolumn{3}{@{}c@{}}{Noise} & \multicolumn{4}{@{}c@{}}{Blur} & \multicolumn{4}{@{}c@{}}{Weather} & \multicolumn{4}{@{}c@{}}{Digital} \\\cmidrule{4-6} \cmidrule{7-10} \cmidrule{11-14} \cmidrule{15-18}
& & & Gau & shot & Imp & Def & Glass & Mot & Zoom & Snow & Frost & Fog & Bright & Cont & Elas & Pixel & JPEG \\
\midrule
\multirow{2}*{\rotatebox{90}{Res50}}
& VQ-VAE &\bf68.0&68.3&70.7&72.1&68.9&\bf79.1&\bf71.2&\bf73.6&\bf73.9&\bf67.5&\bf62.7&\bf52.8&\bf67.6&\bf76.7&57.9&\bf57.3 \\
& AdvWavAug &69.0&\bf66.7&\bf68.4&68.8&68.4&81.0&73.5&74.9&78.2&71.1&66.6&53.5&67.9&77.2&\bf56.2&63.3 \\
\midrule
\multirow{2}*{\rotatebox{90}{Res101}}
& VQ-VAE &\bf62.4&61.9&64.4&64.6&63.6&\bf72.1&\bf67.7&\bf68.0&\bf70.2&\bf63.2&\bf57.9&\bf48.3&\bf61.7&\bf70.0&50.8&\bf50.8 \\
& AdvWavAug &63.7&\bf60.2&\bf61.5&\bf61.8&\bf62.9&75.7&68.2&70.2&72.4&65.9&62.3&49.7&63.2&70.8&\bf50.4&60.0 \\
\midrule
\multirow{2}*{\rotatebox{90}{Res152}}
& VQ-VAE &\bf58.2&\bf56.0&\bf57.9&\bf57.3&\bf60.2&\bf69.8&\bf63.0&\bf64.1&\bf65.2&\bf58.4&\bf54.4&\bf45.1&\bf57.8&\bf65.8&\bf48.9&\bf49.4 \\
& AdvWavAug &63.2&62.0&63.4&63.6&62.4&74.9&65.8&69.9&68.1&62.8&56.3&47.5&61.9&70.7&60.4&57.8 \\
\midrule
\multirow{2}*{\rotatebox{90}{SwinT}}
& VQ-VAE &57.0&\bf44.6&\bf45.4&\bf45.8&65.8&\bf71.6&64.1&73.3&51.9&48.0&58.9&\bf44.8&42.2&\bf69.7&68.7&60.4 \\
& AdvWavAug &\bf56.1&46.6&47.4&47.7&\bf62.9&73.6&\bf61.7&\bf71.2&\bf51.6&\bf47.1&\bf53.3&46.1&\bf40.0&70.4&\bf61.2&\bf60.1 \\
\midrule
\multirow{2}*{\rotatebox{90}{SwinS}}
& VQ-VAE &49.1&\bf37.8&\bf38.3&\bf38.0&57.7&\bf65.1&54.3&63.8&\bf43.5&\bf42.2&53.1&\bf39.9&37.4&\bf62.5&\bf49.4&\bf53.4 \\
& AdvWavAug &\bf48.9&37.9&39.1&38.1&\bf56.6&67.3&\bf54.0&\bf63.3&44.4&42.8&\bf45.0&41.0&\bf36.3&63.4&\bf49.4&55.2 \\
\bottomrule
\end{tabular*}
\end{minipage}
\end{center}
\end{sidewaystable}

\renewcommand\arraystretch{1.2}
\begin{sidewaystable}\small
\begin{center}
\begin{minipage}{\textheight}
\caption{Comparison of generalization with baseline, AdvWavAug and VQ-VAE augmentation on different corruptions in ImageNet-C.}
\label{tab:4.2}
\begin{tabular*}{\textheight}{@{\extracolsep{\fill}}lllccccccccccccccc@{\extracolsep{\fill}}}
\toprule%
& Method & mCE \textcolor{red}{$\downarrow$} & \multicolumn{3}{@{}c@{}}{Noise} & \multicolumn{4}{@{}c@{}}{Blur} & \multicolumn{4}{@{}c@{}}{Weather} & \multicolumn{4}{@{}c@{}}{Digital} \\\cmidrule{4-6} \cmidrule{7-10} \cmidrule{11-14} \cmidrule{15-18}
& & & Gau & shot & Imp & Def & Glass & Mot & Zoom & Snow & Frost & Fog & Bright & Cont & Elas & Pixel & JPEG \\
\midrule
\multirow{3}*{\rotatebox{90}{SwinT}}
& Baseline &62.0&52.2&53.7&53.6&67.9&78.6&64.1&75.3&55.9&52.8&51.3&48.1&45.1&75.7&76.3&79.1 \\
& VQ-VAE &60.3&48.7&49.8&50.7&66.8&76.9&65.2&75.4&52.4&49.2&50.7&46.8&45.5&75.3&76.5&75.0 \\
& AdvWavAug &\bf53.2&\bf44.3&\bf46.1&\bf46.3&\bf60.8&\bf70.9&\bf60.1&\bf70.0&\bf46.4&\bf43.6&\bf49.0&\bf42.1&\bf38.0&\bf66.2&\bf59.2&\bf54.9 \\
\midrule
\multirow{3}*{\rotatebox{90}{SwinS}}
& Baseline &54.9&42.9&44.9&43.3&61.3&74.1&56.6&67.5&50.9&48.5&46.0&44.1&42.1&68.9&62.1&70.7 \\
& VQ-VAE &46.2&35.6&35.7&35.1&55.9&\bf63.8&53.0&62.2&42.3&39.1&43.6&\bf37.6&33.8&\bf60.5&45.5&\bf49.0 \\
& AdvWavAug &\bf45.8&\bf34.4&\bf35.3&\bf34.9&\bf54.1&64.1&\bf51.4&\bf61.8&\bf40.7&\bf38.9&\bf43.2&37.8&\bf33.4&61.6&\bf44.8&50.6 \\
\midrule
\multirow{3}*{\rotatebox{90}{SwinB}}
& Baseline &54.5&43.4&44.9&43.8&61.4&71.4&55.1&66.7&50.0&48.4&47.2&43.2&38.9&70.4&65.5&66.6 \\
& VQ-VAE &50.2&38.7&39.7&39.6&57.3&66.0&54.4&63.9&47.1&42.9&44.1&38.3&34.6&64.6&62.0&60.1 \\
& AdvWavAug &\bf44.9&\bf33.9&\bf35.2&\bf34.2&\bf53.5&\bf62.8&\bf50.9&\bf61.1&\bf40.1&\bf37.9&\bf39.4&\bf36.8&\bf31.7&\bf60.6&\bf45.9&\bf49.5 \\
\bottomrule
\end{tabular*}
\end{minipage}
\end{center}
\end{sidewaystable}

\renewcommand\arraystretch{1.2}
\begin{sidewaystable}\small
\begin{center}
\begin{minipage}{\textheight}
\caption{Combination with another augmentation technique AugMix on different corruptions in ImageNet-C. We combine our $\rm AdvWavAug$ with AugMix, and compare its performance with the original AugMix on ResNet-50.}
\label{tab:5.2}
\begin{tabular*}{\textheight}{@{\extracolsep{\fill}}lllccccccccccccccc@{\extracolsep{\fill}}}
\toprule%
& Method & mCE \textcolor{red}{$\downarrow$} & \multicolumn{3}{@{}c@{}}{Noise} & \multicolumn{4}{@{}c@{}}{Blur} & \multicolumn{4}{@{}c@{}}{Weather} & \multicolumn{4}{@{}c@{}}{Digital} \\\cmidrule{4-6} \cmidrule{7-10} \cmidrule{11-14} \cmidrule{15-18}
& & & Gau & shot & Imp & Def & Glass & Mot & Zoom & Snow & Frost & Fog & Bright & Cont & Elas & Pixel & JPEG \\
\midrule
\multirow{4}*{\rotatebox{90}{Res50}}
& Baseline &77.4&78.3&79.5&82.0&73.7&89.7&78.1&81.0&82.6&77.5&68.3&57.8&72.2&85.5&77.4&77.7 \\
\cmidrule{2-18}
& AugMix &65.3&67.0&66.0&68.0&64.0&79.0&\bf59.0&64.0&\bf69.0&68.0&65.0&54.0&\bf57.0&\bf74.0&60.0&66.0 \\
\cmidrule{2-18}
& $\rm AdvWavAug$ &68.0&65.2&67.1&66.3&67.5&77.5&72.3&71.9&75.5&68.9&67.4&53.2&68.5&75.7&\bf59.8&62.5 \\
& +AugMix &\bf63.3&\bf61.3&\bf60.4&\bf59.8&\bf61.2&\bf74.3&59.1&\bf61.9&71.8&\bf66.7&\bf60.7&\bf51.9&62.6&74.7&61.3&\bf61.4 \\
\bottomrule
\end{tabular*}
\end{minipage}
\end{center}
\end{sidewaystable}

\renewcommand\arraystretch{1.2}
\begin{sidewaystable}\small
\begin{center}
\begin{minipage}{\textheight}
\caption{Comparison of generalization with MAE and MAE+AdvWavAug on different corruptions in ImageNet-C.}
\label{tab:6.2}
\begin{tabular*}{\textheight}{@{\extracolsep{\fill}}lllccccccccccccccc@{\extracolsep{\fill}}}
\toprule%
& Method & mCE \textcolor{red}{$\downarrow$} & \multicolumn{3}{@{}c@{}}{Noise} & \multicolumn{4}{@{}c@{}}{Blur} & \multicolumn{4}{@{}c@{}}{Weather} & \multicolumn{4}{@{}c@{}}{Digital} \\\cmidrule{4-6} \cmidrule{7-10} \cmidrule{11-14} \cmidrule{15-18}
& & & Gau & shot & Imp & Def & Glass & Mot & Zoom & Snow & Frost & Fog & Bright & Cont & Elas & Pixel & JPEG \\
\midrule
\multirow{2}*{\rotatebox{90}{ViTB}}
& Baseline &51.7&-&-&-&-&-&-&-&-&-&-&-&-&-&-&- \\
& AdvWavAug &\bf47.0&34.9&35.6&35.3&57.7&65.6&51.8&63.8&38.8&38.5&39.5&38.1&34.6&66.7&48.4&55.0 \\
\midrule
\multirow{2}*{\rotatebox{90}{ViTL}}
& Baseline &41.8&-&-&-&-&-&-&-&-&-&-&-&-&-&-&- \\
& AdvWavAug &\bf38.1&27.2&27.2&26.4&48.7&56.8&40.4&49.7&30.1&31.3&32.4&32.5&29.2&55.5&38.8&45.4 \\
\midrule
\multirow{2}*{\rotatebox{90}{ViTH}}
& Baseline &33.8&-&-&-&-&-&-&-&-&-&-&-&-&-&-&- \\
& AdvWavAug &\bf32.6&23.9&22.7&23.8&40.6&50.2&32.6&39.9&24.9&29.4&25.8&28.2&27.9&50.0&30.7&38.0 \\
\bottomrule
\end{tabular*}
The baseline models are taken directly from~\citep{he2021masked}, which have no detailed results on different corruptions.
\end{minipage}
\end{center}
\end{sidewaystable}

\renewcommand\arraystretch{1.2}
\begin{sidewaystable}\small
\begin{center}
\begin{minipage}{\textheight}
\caption{Comparison of generalization with AdvWavAug and AdvWavAug+PGD on different corruptions in ImageNet-C.}
\label{tab:7.2}
\begin{tabular*}{\textheight}{@{\extracolsep{\fill}}lllccccccccccccccc@{\extracolsep{\fill}}}
\toprule%
& Method & mCE \textcolor{red}{$\downarrow$} & \multicolumn{3}{@{}c@{}}{Noise} & \multicolumn{4}{@{}c@{}}{Blur} & \multicolumn{4}{@{}c@{}}{Weather} & \multicolumn{4}{@{}c@{}}{Digital} \\\cmidrule{4-6} \cmidrule{7-10} \cmidrule{11-14} \cmidrule{15-18}
& & & Gau & shot & Imp & Def & Glass & Mot & Zoom & Snow & Frost & Fog & Bright & Cont & Elas & Pixel & JPEG \\
\midrule
\multirow{2}*{\rotatebox{90}{Res50}}
& AdvWavAug &69.0&66.7&68.4&68.8&68.4&81.0&73.5&74.9&78.2&71.1&\bf66.6&53.5&\bf67.9&77.2&\bf56.2&63.3 \\
& +PGD &\bf68.0&\bf65.2&\bf67.1&\bf66.3&\bf67.5&\bf77.5&\bf72.3&\bf71.9&\bf75.5&\bf68.9&67.4&\bf53.2&68.5&\bf75.7&59.8&\bf62.5 \\
\midrule
\multirow{2}*{\rotatebox{90}{Res101}}
& AdvWavAug &63.7&\bf60.2&\bf61.5&\bf61.8&\bf62.9&75.7&68.2&70.2&72.4&65.9&\bf62.3&49.7&\bf63.2&\bf70.8&50.4&60.0 \\
& +PGD &\bf63.0&61.0&62.9&62.5&64.0&\bf73.1&\bf65.6&\bf68.4&\bf71.6&\bf64.5&62.8&\bf48.7&63.9&71.0&\bf49.4&\bf55.7 \\
\midrule
\multirow{2}*{\rotatebox{90}{Res152}}
& AdvWavAug &63.2&62.0&63.4&63.6&\bf62.4&74.9&\bf65.8&69.9&\bf68.1&\bf62.8&\bf56.3&\bf47.5&\bf61.9&70.7&60.4&\bf57.8 \\
& +PGD &\bf62.7&\bf60.4&\bf61.6&\bf62.4&62.9&\bf70.7&65.7&\bf67.4&71.0&63.8&64.2&50.6&65.7&\bf68.1&\bf48.1&58.3 \\
\midrule
\multirow{2}*{\rotatebox{90}{SwinT}}
& AdvWavAug &\bf56.1&46.6&\bf47.4&47.7&\bf62.9&\bf73.6&\bf61.7&\bf71.2&51.6&\bf47.1&53.3&\bf46.1&\bf40.0&\bf70.4&\bf61.2&\bf60.1 \\
& +PGD &56.6&\bf46.4&\bf47.4&\bf47.4&65.0&74.6&62.0&71.9&\bf50.9&47.4&\bf52.7&46.3&40.3&70.8&63.6&62.5 \\
\midrule
\multirow{2}*{\rotatebox{90}{SwinS}}
& AdvWavAug &\bf48.9&\bf37.9&\bf39.1&\bf38.1&\bf56.6&\bf67.3&\bf54.0&\bf63.3&\bf44.4&\bf42.8&\bf45.0&\bf41.0&\bf36.3&\bf63.4&\bf49.4&\bf55.2 \\
& +PGD &53.4&41.0&42.2&42.2&60.4&72.0&58.3&68.9&47.7&45.0&50.2&44.3&39.7&66.6&61.7&61.4 \\
\bottomrule
\end{tabular*}
\end{minipage}
\end{center}
\end{sidewaystable}

\subsection{Experimental Results on Black-box Attacks}
\label{sec:c.5}

In addition to white-box attacks, black-box attacks are also important for evaluating the attack performance of a method. perspective to measure the attack performance of a method. We conduct black-box attacks on various models, including  ResNet 50 (Res50)~(\cite{he2016deep}), ResNet 152 (Res152)~(\cite{he2016deep}), Inception V3 (IncV3)~(\cite{szegedy2016rethinking}), Adv Inception V3 (AdvIncV3)~(\cite{kurakin2018adversarial}),  Efficientnet B6 AP (EffiAP)~(\cite{tan2019efficientnet}), and Swin Transformer Tiny (SwinT)~\citep{liu2021swin}. We set the attack iteration number as $n=100$. The ASR-FID curves are shown in Fig.~\ref{fig:app1}. The ASR-LPIPS curves are shown in Fig.~\ref{fig:app2}. The curves demonstrate that our AdvWavAug-DIM consistently outperforms DIM. The black-box attacks showcase the transferability of our method, indicating that adversarial examples generated by AdvWavAug-DIM exhibit superior performance (better attack performance given the same image quality) compared to adversarial examples generated by DIM. This improvement can be attributed to our method's ability to project the original perturbations onto the data manifold, enhancing the attack effectiveness.

\section{Detailed results on Different Corruptions of ImageNet-C dataset}
\label{appendix:d}
In this section, we provide detailed results on different corruptions (\eg, gaussian noise, glass blur \etc.) from the ImageNet-C dataset. There are too many corruption categories in ImageNet-C, we exhibit the abbreviations of different categories in these tables with Gaussian noise denoted as Gau, shot noise denoted as Shot, impulse noise denoted as Imp, defocus blur denoted as Def, glass blur denoted as Glass, motion blur denoted as Mot, zoom blur denoted as Zoom, snow denoted as Snow, frost denoted as Frost, fog denoted as Fog, brightness denoted as Bright, contrast denoted as Cont, elastic transform denoted as Elas, pixelate denoted as Pixel, JPEG compression denoted as JPEG. We extend the detailed ImageNet-C results in Tab.~2 to Tab.~\ref{tab:2.2}, results in Tab.~3 to Tab.~\ref{tab:3.2}, results in Tab.~4 to Tab.~\ref{tab:4.2}, results in Tab.~5 to Tab.~\ref{tab:5.2}, results in Tab.~6 to Tab.~\ref{tab:6.2}, results in Tab.~7 to Tab.~\ref{tab:7.2}. What should be noticed in Tab.~\ref{tab:6.2} is that we take the results directly from~\citep{he2021masked}, which have no detailed data of different corruptions.
\end{appendix}
\bibliographystyle{elsarticle-harv} 
\bibliography{example}






\end{document}